\documentclass{article}
\usepackage{open_preprint}

\usepackage[utf8]{inputenc}
\usepackage[T1]{fontenc}
\usepackage{hyperref}
\usepackage{url}
\usepackage{booktabs}
\usepackage{amsmath}
\usepackage{amssymb}
\usepackage{mathtools}
\usepackage{amsthm}
\usepackage{graphicx}
\usepackage{float}
\usepackage{algorithmic}
\usepackage{algorithm}
\usepackage{microtype}
\usepackage{xcolor}
\usepackage{subcaption}
\mathtoolsset{showonlyrefs}

\hypersetup{
  pdftitle={Gromov--Monge Flow Matching for Equivariant Graph Generation},
  pdfauthor={M. Piening}
}

\definecolor{moritzgreen}{RGB}{46,125,50}

\newtheorem{theorem}{Theorem}[section]
\newtheorem{proposition}[theorem]{Proposition}
\newtheorem{corollary}[theorem]{Corollary}
\theoremstyle{remark}
\newtheorem{remark}[theorem]{Remark}

\newcommand{\R}{\mathbb{R}}
\renewcommand{\d}{\,\mathrm{d}}
\newcommand{\W}{\operatorname{W}}
\newcommand{\mP}{\mathcal{P}}
\newcommand{\bmu}{\boldsymbol{\mu}}
\newcommand{\bnu}{\boldsymbol{\nu}}
\newcommand{\bx}{\boldsymbol{x}}
\newcommand{\by}{\boldsymbol{y}}
\newcommand{\pr}{\mathrm{proj}}

\newcommand{\GM}{\operatorname{GM}}
\DeclareMathOperator*{\argmin}{arg\,min}

\title{Gromov--Monge Flow Matching for Equivariant Graph Generation}
\author{
Moritz Piening \\
Institute of Mathematics, Technische Universität Berlin \\
 Berlin, Germany \\
\texttt{piening@math.tu-berlin.de}
\And
Christian Wald \\
Institut Camille Jordan, INSA Lyon \\
Villeurbanne, France \\
\texttt{christian.wald@insa-lyon.fr}
}

\begin{document}

\maketitle

\begin{abstract}
Graphs are invariant under node permutations, motivating the use of permutation-equivariant architectures in generative models. In flow matching, however, symmetry may also enter the source--target coupling: once graph pairs are compared up to node relabeling, the natural Wasserstein geometry is that of the graph quotient space. The Euclidean quotient metric of this space coincides with the Gromov--Monge distance, obtained by optimally relabeling the nodes.
We develop this perspective theoretically, showing that quotient couplings can be lifted to aligned representatives without additional cost and that symmetrization yields equivariant flow-matching minimizers, including for categorical endpoint prediction. In practice, exact Gromov--Monge alignment is intractable, so we construct minibatch couplings using efficient Gromov--Wasserstein-type relaxations and lower bounds for the inner node alignment, optionally combined with an outer assignment between graphs. The resulting procedure changes only the training coupling and is compatible with standard permutation-equivariant architectures. Across continuous graph and categorical molecular generation, these structure-aware couplings substantially improve sample quality at small integration budgets, while our scaled-up molecular models remain competitive under conventional many-step sampling.
\end{abstract}

\vspace{-3.5mm}\section{Introduction}

\label{sec:introduction}

Flow matching learns a velocity field that transports a tractable source law
to data using simulation-free regression and ordinary
differential equation (ODE) sampling
\citep{albergo2023building_fm,lipman2023flowmatching_fm,liuflow_fm_recti}.
Its conditional paths are determined by a source--target coupling. Although all
such couplings have the same marginals at start and end time, transport-informed choices can produce
shorter, straighter trajectories and simplify numerical
integration \citep{chemseddine2025conditional,tong2023improving_minibatch,
pooladian2023multisample_couplings}. The correct notion of transport is less
obvious when data are defined only up to symmetry \citep{klein2023equivariant_flow_matching,kohler2020equivariant}.

For graphs the symmetry is node relabelling. We encode edge channels and
diagonal node features of an \(N\)-node graph by
\(E\in\R^{N\times N\times C}\). Because a graph has no distinguished vertex
order, a permutation \(\sigma\in\mathfrak S_N\) acts simultaneously on both
node indices according to
\[
\bigl(\rho_N^{\rm graph}(\sigma)E\bigr)_{ijc}
\coloneqq
E_{\sigma^{-1}(i)\sigma^{-1}(j)c},
\qquad
1 \leq i, j \leq N.
\]
The unordered graph is the orbit
\[
[E]\in Q\coloneqq\R^{N^2C}/G_N^{\rm graph},\qquad
G_N^{\rm graph}=\rho_N^{\rm graph}(\mathfrak S_N)\subset O(N^2C).
\]
Equivariant architectures employed in graph generative models respect this symmetry
\citep{eijkelboom2024variational,jo2022gdss,vignac2023digress}, but many models do not
by themselves choose which labelled representatives to connect during flow
matching. Each conditional bridge from \(E\) to an
unordered target \([F]\) may select some \(\sigma\cdot F\). Minimizing its length is
a \emph{Gromov--Monge} node-alignment problem, the hard-assignment counterpart of
Gromov--Wasserstein transport
\citep{bauer2025zgw,memoli2011gromov,memoli2024comparison}.

We connect this representative choice to equivariant flow matching on general
quotients \(\R^D/G\) \citep{klein2023equivariant_flow_matching,
kohler2020equivariant,song2023equivariant_flow_matching}. Our theory shows that
optimal quotient couplings lift without extra cost and that symmetrizing a
coupling between chosen representatives yields an equivariant minimizer of the flow-matching
objective, including for categorical endpoint prediction. For graphs, we
approximate the resulting Gromov--Monge alignment with Gromov--Wasserstein
solvers and inexpensive lower bounds \citep{memoli2011gromov,bauer2025zgw,piening2025novel}, optionally followed by an outer
minibatch assignment. The aligned pairs train the same equivariant velocity or
categorical endpoint models as standard flow matching; only the coupling
changes.

\vspace{-3mm}\paragraph{Relation to prior work}
Graph generators commonly enforce node-relabeling symmetry through equivariant
architectures, including molecular and discrete diffusion
\citep{hoogeboom2022edm,jo2022gdss,vignac2023digress} and variational flow
matching \citep{eijkelboom2024variational}. Recent graph flow methods additionally
optimize the source--target graph pairing using minibatch optimal transport
\citep{hou2026ggflow,wijesinghe2026flowette}. Our method complements these
approaches by explicitly choosing aligned representatives within each graph orbit,
while treating the outer graph assignment as optional, see
Appendix~\ref{supp_sec:related_work} for a more detailed comparison.

Optimal-transport couplings have previously been used to straighten Euclidean
flow-matching paths
\citep{chemseddine2025conditional,pooladian2023multisample_couplings,
tong2023improving_minibatch}.When the underlying sample space itself carries an optimal-transport geometry, this naturally leads to nested transport formulations, as recently employed for point-cloud generative models \citep{haviv2025wasserstein,piening2026generalized_wow_fm,piening2025slicing_wow}.
For graphs, the analogous construction must additionally account for node
correspondences, leading to a Gromov--Monge ground cost. 
Gromov--Wasserstein methods provide a
natural relaxation because they compare relational structure and have long
been used for graph and structured-data alignment
\citep{beier2025joint,chowdhury2021quantized,peyre2016gromov,vayer2020fused}. 
We use
the resulting plans to select a node permutation for each target graph inside
its conditional flow-matching bridge, rather than to generate graphs directly.

Our quotient-space analysis builds on equivariant generative flows \citep{klein2023equivariant_flow_matching,kohler2020equivariant,song2023equivariant_flow_matching}.
Beyond particle permutations by linear
assignment \citep{klein2023equivariant_flow_matching,haviv2025wasserstein,hui2025notsooptimal_point_cloud_flow_matching,piening2026generalized_wow_fm}, graph relabeling acts on both node indices and yields the quadratic
Gromov--Monge problem. We connect this graph-specific alignment to quotient
optimal transport and representative lifts \citep{memoli2024comparison}.

Our contributions are:
\begin{itemize}
    \item We show that optimal quotient couplings lift to Euclidean couplings
    with the same quadratic cost, whose linear interpolations project to
    constant-speed Wasserstein geodesics.
    \item We show that diagonal symmetrization preserves the flow-matching
    objective on equivariant fields and yields an equivariant minimizer,
    including for categorical endpoint prediction.
    \item A practical outer--inner Gromov--Wasserstein alignment constructs
    minibatch couplings for continuous and categorical graph flows.
    \item Across continuous and categorical graph-generation benchmarks,
    structure-aware couplings improve sample quality when the learned flow is
    sampled using few Euler integration steps, while scaled-up molecular models
    remain competitive under conventional many-step sampling.
\end{itemize}

\vspace{-3.5mm}\section{Background on transport and flow matching}
In flow matching generative models, curves of probability measures are constructed that connect an easy-to-sample source measure with a target measure which is available only through samples \citep{lipman2023flowmatching_fm,wald2025flow}.
This construction is not limited to probability measures on Euclidean spaces \citep{chen2024flow_riemannian_flow_matching}.
After a short recap on Wasserstein geometry \citep{AmbrosioGigliSavare2005,Santambrogio2015,wald2025flow}, we will see how the same construction can be formulated on quotient spaces induced by orthogonal group actions.

\vspace{-3.5mm}\subsection{Wasserstein geometry and dynamic transport}
Let $(X,d_X)$ be a Polish metric space and $\mP_2(X)$ the set of Borel probability measures on $X$ with finite second moments, i.e.
$
\int_X d_X(x,x_0)^2 \,\d\mu(x) < \infty 
$
for some $x_0 \in X$.
The set $\mP_2(X)$ becomes a complete metric space with the \emph{Wasserstein distance} $\W_{2,X}$ \citep{Santambrogio2015,Villani2009oldandnew} which is given for any $\mu,\nu \in \mP_2(X)$ by
\begin{equation}
\label{eq:wasserstein}
\W_{2,X}^2(\mu,\nu)
\coloneqq
\min_{\pi \in \mathrm{c}_X(\mu,\nu)}
\int_{X \times X} d_X(x,x')^2 \,\d\pi(x,x').
\end{equation}
The \emph{couplings} or \emph{transport plans} are given by 
\[
\mathrm{c}_X(\mu,\nu)
=
\{ \pi \in \mP_2(X \times X): \mathrm{proj}^0_\sharp \pi = \mu,\mathrm{proj}^1_\sharp \pi = \nu\},
\]
with $\mathrm{proj}^i: X \times X \to X$, $(x_0,x_1) \mapsto x_i$. 
By $\mathrm{c}_{X}^{{\rm opt}}(\mu,\nu)$
we denote \emph{optimal} couplings, where the minimum in \eqref{eq:wasserstein} is attained.
Throughout, let $I=[0,1]$.

For $X=\R^D$, let $(\mu_t)_{t\in I}$ be narrowly continuous and let
$v:I\times\R^D\to\R^D$ be Borel. The pair $(\mu_t,v_t)$ satisfies the
continuity equation if
\begin{equation}
\label{ce1}
\partial_t\mu_t+\operatorname{div}(v_t\mu_t)=0
\end{equation}
in the sense of distributions. If, in addition,
\[
\int_I\int_{\R^D}\|v_t(x)\|^2\,\d\mu_t(x)\,\d t<\infty,
\]
then $(\mu_t)_{t\in I}$ is absolutely continuous with respect to $\W_{2,\R^D}$.
Conversely, curves that are absolutely continuous with respect to
$\W_{2,\R^D}$ and have square-integrable speed admit a Borel field $v$
satisfying \eqref{ce1} and the displayed integrability condition
\citep[Chapter~8]{AmbrosioGigliSavare2005}. The definitions and bounds used
below are recalled in Appendix~\ref{app:measure-transport}. If $v$ is
sufficiently regular,
the characteristic ODE
\begin{equation}
\label{flow-ode}
\dot\gamma(t,x)=v(t,\gamma(t,x)),\qquad \gamma(0,x)=x,
\end{equation}
transports $\mu_0$ to $\mu_t$, i.e.,
$\mu_t=\gamma_{t,\sharp}\mu_0$
\citep[Proposition~8.1.8]{AmbrosioGigliSavare2005}.

\vspace{-3.5mm}\subsection{Euclidean flow matching}
\label{subsec:euclidean-flow-matching}

Flow matching turns the dynamic description above into mean-squared-error
(MSE) vector-field regression
\citep[see, e.g.,][]{lipman2023flowmatching_fm,wald2025flow}.
For $\mu,\nu\in\mP_2(\R^D)$ and
$\pi\in\mathrm{c}_{\R^D}(\mu,\nu)$, define
$\mu_t\coloneqq\pr^t_\sharp\pi$ with
$\pr^t(x,x')\coloneqq(1-t)x+tx'$. Then $(\mu_t,v_t^\pi)$ solves \eqref{ce1}, where $v^\pi$ minimizes
\begin{equation}
\label{minvr}
J_\pi(v)\coloneqq
\int_I\int_{\R^D\times\R^D}
\bigl\|v_t(\pr^t(x,x'))-(x'-x)\bigr\|^2
\,\d\pi(x,x')\,\d t
\end{equation}
over jointly Borel vector fields $v$.
Equivalently, for $(X,X')\sim\pi$, we have that
\begin{equation}
\label{eq:euclidean-fm-conditional-velocity}
v_t^\pi(z)
=
\mathbb{E}_\pi\bigl[X'-X\mid \pr^t(X,X')=z\bigr]
\end{equation}
for $\d t\,\d\mu_t(z)$-a.e.\ $(t,z)$. If
$\pi\in\mathrm{c}_{\R^D}^{\rm opt}(\mu,\nu)$, then the kinetic energy fulfills
\(
\int_I\|v_t^\pi(z)\|^2_{L^2(\mu_t)}\,\d t\
=
\W_{2,\R^D}^2(\mu,\nu).
\)

\vspace{-3mm}\paragraph{Categorical endpoint prediction}
\label{subsec:endpoint-euclidean-flow-matching}

Let \(\mu,\nu\in\mP_2(\R^D)\) and
\(\pi\in\mathrm{c}_{\R^D}(\mu,\nu)\). Suppose that the target is
coordinatewise categorical. Write \([m]\coloneqq\{1,\ldots,m\}\) for a
positive integer \(m\), and fix a positive integer \(M\). For every
\(d\in[D]\), fix distinct values
\(a_{d,1},\ldots,a_{d,M}\in\R\) such that
\(
\nu\!\left(
\prod_{d=1}^D\{a_{d,1},\ldots,a_{d,M}\}
\right)=1.
\)
For \((X,X')\sim\pi\), set
\(Z_t\coloneqq(1-t)X+tX'\) and \(\mu_t\coloneqq\operatorname{Law}(Z_t)\).
Let \(\kappa_d(x')\in[M]\) denote the unique index such that
\(x'_d=a_{d,\kappa_d(x')}\). Choosing jointly Borel versions, define the
conditional coordinate probabilities
\[
q_t^{\ast,d}(n\mid z)
\coloneqq
\mathbb P_\pi\!\left(X'_d=a_{d,n}\mid Z_t=z\right),
\qquad d\in[D],\quad n\in[M].
\]
For a.e.\ \(t<1\) and \(\mu_t\)-a.e.\ \(z\),
\eqref{eq:euclidean-fm-conditional-velocity} yields
\begin{equation}
\label{eq:euclidean-fm-coordinate-endpoint-mixture}
\bigl(v_t^\pi(z)\bigr)_d
=
\frac{1}{1-t}
\left(
\sum_{n=1}^M a_{d,n}q_t^{\ast,d}(n\mid z)-z_d
\right).
\end{equation}

Moreover, \((q^{\ast,d})_{d=1}^D\) minimizes the coordinatewise
cross-entropy objective
\begin{equation}
\label{eq:euclidean-fm-coordinate-cross-entropy}
 J_\pi^{\rm coord}\bigl((q^d)_{d=1}^D\bigr)
\coloneqq
-\int_I\int_{\R^D\times\R^D}\sum_{d=1}^D
\log q_t^d\!\left(\kappa_d(x')\mid (1-t)x+tx'\right)
\,\d\pi(x,x')\,\d t
\end{equation}
over jointly Borel kernels \(q^d:I\times\R^D\to\Delta_M\),
\(d\in[D]\), where
\(\Delta_M\coloneqq\{p\in[0,1]^M:\sum_{n=1}^M p_n=1\}\) and
\(-\log 0\coloneqq+\infty\).
Thus, although the joint endpoint may take up to \(M^D\) values, it suffices
to train \(D\) distributions over \(M\) classes using
\eqref{eq:euclidean-fm-coordinate-cross-entropy} and recover the velocity
through \eqref{eq:euclidean-fm-coordinate-endpoint-mixture}
without any conditional independence assumption
\citep{chemseddine2026spherical,eijkelboom2024variational}.

\vspace{-3.5mm}\section{Flow matching on quotient spaces}
\label{sec:quotient-theory}

Let \(G\subset O(D)\) be compact, let \(Q\coloneqq\R^D/G\), and write
\(\mathfrak q(x)=[x]\) for the quotient map. The quotient metric is
\[
d_Q([x],[y])\coloneqq\min_{g\in G}\|x-gy\|,
\]
and \(\W_{2,Q}\) denotes the induced Wasserstein distance. For
\(\bmu\in\mP_2(Q)\), write
\[
[\bmu]\coloneqq
\{\mu\in\mP_2(\R^D):\mathfrak q_\sharp\mu=\bmu\}
\]
for its representative laws. In the graph setting, simultaneous node
relabeling gives the following construction. Let \(Z\coloneqq\R^C\) with
\(d_Z(u,v)\coloneqq\|u-v\|_2\). For \(E,F\in Z^{N\times N}\),
\begin{equation}
\label{eq:gromov_monge_as_euclidean_quoitient}
\GM_{2,Z}^2([E],[F])
\coloneqq
\min_{\sigma\in\mathfrak S_N}
\sum_{i,j=1}^N
d_Z^2\bigl(E_{ij},F_{\sigma(i)\sigma(j)}\bigr).
\end{equation}
Thus, choosing representatives of graph orbits is precisely the hard
Gromov--Monge node-alignment problem, studied in optimal transport literature \citep{memoli2024comparison}.

\begin{theorem}[Representative lifts and quotient geodesics]
\label{thm:quotient-lift-geodesic}
Let \(\bmu,\bnu\in\mP_2(Q)\), fix \(\mu\in[\bmu]\), and let
\(\Gamma\in\mathrm{c}_Q(\bmu,\bnu)\). There exist
\(\nu_\Gamma\in[\bnu]\) and
\(\gamma_\Gamma\in\mathrm{c}_{\R^D}(\mu,\nu_\Gamma)\) such that
\((\mathfrak q,\mathfrak q)_\sharp\gamma_\Gamma=\Gamma\) and
\[
\int_{\R^D\times\R^D}\|x-y\|^2\,\d\gamma_\Gamma(x,y)
=
\int_{Q\times Q}d_Q(\bx,\by)^2\,\d\Gamma(\bx,\by).
\]
If \(\Gamma\) is optimal, then \(\gamma_\Gamma\) is optimal between its
representative marginals. Moreover, in this case, with
\(\pr^t(x,y)=(1-t)x+ty\) and
\(\mu_t\coloneqq\pr^t_\sharp\gamma_\Gamma\), the projected curve
\(\bmu_t\coloneqq\mathfrak q_\sharp\mu_t\) is a constant-speed
\(\W_{2,Q}\)-geodesic, and the flow-matching field associated with the lifted
coupling \(\gamma_\Gamma\) satisfies
\[
\int_I\int_{\R^D}\|v_t^{\gamma_\Gamma}(z)\|^2
\,\d\mu_t(z)\,\d t
=\W_{2,Q}^2(\bmu,\bnu).
\]
\end{theorem}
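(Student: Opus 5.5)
We construct the lift by disintegration and measurable selection. First disintegrate \(\mu\) along the quotient map: since \(\mathfrak q_\sharp\mu=\bmu\), there is a Borel family \((\mu^{\bx})_{\bx\in Q}\) of probability measures with \(\mu^{\bx}\) concentrated on the orbit \(\mathfrak q^{-1}(\bx)\) and \(\mu=\int_Q\mu^{\bx}\,\d\bmu(\bx)\). Next choose a Borel map \(s:\R^D\times Q\to\R^D\) with \(s(x,\by)\in\mathfrak q^{-1}(\by)\) and \(\|x-s(x,\by)\|=d_Q([x],\by)\). Here \(G\) is compact, so every orbit is compact. The set-valued map \((x,\by)\mapsto\argmin_{y\in\mathfrak q^{-1}(\by)}\|x-y\|\) therefore has nonempty compact values and a closed graph; this uses continuity of \(d_Q\) and closedness of \(\{(\by,y):\mathfrak q(y)=\by\}\). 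The Kuratowski--Ryll-Nardzewski selection theorem then gives \(s\). Note that \(\min_{y\in\mathfrak q^{-1}(\by)}\|x-y\|=\min_{g}\|x-gy_0\|=d_Q([x],\by)\) because \(G\subset O(D)\) acts isometrically. Now disintegrate \(\Gamma\) with respect to its first marginal as \(\Gamma=\int\Gamma^{\bx}\,\d\bmu(\bx)\), and define
\[
\gamma_\Gamma\coloneqq\int_Q\int_{\R^D}\int_Q\delta_{(x,s(x,\by))}\,\d\Gamma^{\bx}(\by)\,\d\mu^{\bx}(x)\,\d\bmu(\bx).
\]
The first marginal is \(\mu\) by construction. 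The pushforward \((\mathfrak q,\mathfrak q)_\sharp\gamma_\Gamma\) equals \(\int\delta_{\bx}\otimes\Gamma^{\bx}\,\d\bmu=\Gamma\), since \(\mathfrak q(x)=\bx\) \(\mu^{\bx}\)-a.e. and \(\mathfrak q(s(x,\by))=\by\). Setting \(\nu_\Gamma\) to be the second marginal of \(\gamma_\Gamma\) gives \(\mathfrak q_\sharp\nu_\Gamma=\bnu\), so \(\nu_\Gamma\in[\bnu]\). The cost identity follows directly from \(\|x-s(x,\by)\|=d_Q([x],\by)\).

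For optimality, we first show \(\W_{2,\R^D}(\mu',\nu')\ge\W_{2,Q}(\mathfrak q_\sharp\mu',\mathfrak q_\sharp\nu')\) for all \(\mu',\nu'\). Given any coupling \(\gamma\) of \(\mu',\nu'\), the projection \((\mathfrak q,\mathfrak q)_\sharp\gamma\) is a quotient coupling, and \(d_Q([x],[y])\le\|x-y\|\) pointwise, so its cost is at most that of \(\gamma\). If \(\Gamma\) is optimal, the cost of \(\gamma_\Gamma\) equals \(\W_{2,Q}^2(\bmu,\bnu)\le\W_{2,\R^D}^2(\mu,\nu_\Gamma)\). Hence \(\gamma_\Gamma\) is optimal and \(\W_{2,\R^D}(\mu,\nu_\Gamma)=\W_{2,Q}(\bmu,\bnu)\).

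For the geodesic claim, take \(s\le t\). The coupling \((\pr^s,\pr^t)_\sharp\gamma_\Gamma\) of \(\mu_s,\mu_t\) has cost \((t-s)^2\int\|x-y\|^2\d\gamma_\Gamma=(t-s)^2\W_{2,Q}^2(\bmu,\bnu)\). By the projection inequality, \(\W_{2,Q}(\bmu_s,\bmu_t)\le\W_{2,\R^D}(\mu_s,\mu_t)\le(t-s)\W_{2,Q}(\bmu,\bnu)\). The reverse inequality comes from the triangle inequality in \(\W_{2,Q}\) applied along \(0\le s\le t\le1\): strict inequality on any subinterval would give \(\W_{2,Q}(\bmu,\bnu)<\W_{2,Q}(\bmu,\bnu)\). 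So the projected curve is a constant-speed geodesic.

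For the kinetic energy, we use the Euclidean fact recalled in Subsection~\ref{subsec:euclidean-flow-matching}. The coupling \(\gamma_\Gamma\) is optimal in \(\mathrm{c}_{\R^D}(\mu,\nu_\Gamma)\), so \(\int_I\|v^{\gamma_\Gamma}_t\|^2_{L^2(\mu_t)}\,\d t=\W_{2,\R^D}^2(\mu,\nu_\Gamma)=\W_{2,Q}^2(\bmu,\bnu)\). We expect the main obstacle to be the measurability bookkeeping in the first step, namely a jointly Borel selection \(s\) and a well-defined disintegration. We would handle it with the measurable-selection theorem for closed-valued measurable correspondences, using compactness of \(G\) to guarantee nonempty compact argmin sets.
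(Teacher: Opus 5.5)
Your proposal is correct and follows essentially the same route as the paper. You disintegrate $\mu$ over $\bmu$ and $\Gamma$ over its first marginal, push forward with a Borel nearest-orbit-point selection to obtain the cost-preserving lift, and then use the $1$-Lipschitz projection for optimality, the coupling $(\pr^s,\pr^t)_\sharp\gamma_\Gamma$ with the triangle inequality for the geodesic claim, and the Euclidean optimal-coupling energy identity for the kinetic energy. The only detail you skip is that $\nu_\Gamma\in[\bnu]$ also requires $\nu_\Gamma\in\mP_2(\R^D)$. The paper checks this via $\|y\|=d_Q([y],[0])$ for $G\subset O(D)$, which gives $\int\|y\|^2\,\d\nu_\Gamma(y)=\int_Q d_Q(\by,[0])^2\,\d\bnu(\by)<\infty$.
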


The proof is given in Appendix~\ref{app:measure-transport}.

The theorem shows that any quotient coupling can be lifted through aligned
representatives without increasing its cost. In particular, an optimal
quotient coupling admits a representative-space lift whose quadratic cost
equals the quotient Wasserstein distance. The resulting lift
\(\gamma_\Gamma\), however, need not be invariant under applying the same group
action to both endpoints, because selecting representatives can break the
symmetry. This matters because the velocity or endpoint network is constrained
to be \(G\)-equivariant. For any representative coupling \(\pi\)---in
particular, for \(\gamma_\Gamma\)---we therefore build a symmetrized coupling by
applying the same group element to both endpoints and averaging over \(G\). For
a law \(\eta\) and a coupling \(\pi\), set
\[
\eta^G\coloneqq\mathbb E_g[g_\sharp\eta],
\qquad
\pi^G\coloneqq\mathbb E_g[(g,g)_\sharp\pi],
\]
where the expectation denotes integration against the normalized Haar measure
on \(G\). For a finite group such as \(\mathfrak S_N\), this is the average over
all group elements. The resulting coupling is diagonally \(G\)-invariant,
meaning that \((h,h)_\sharp\pi^G=\pi^G\) for every \(h\in G\). It couples
\(\mu^G\) and \(\nu^G\) and represents the same quotient coupling as \(\pi\):
\((\mathfrak q,\mathfrak q)_\sharp\pi^G
=(\mathfrak q,\mathfrak q)_\sharp\pi\).
We next show that, on the \(G\)-equivariant model class, training with \(\pi\)
is equivalent to training with \(\pi^G\).

\begin{theorem}[Equivariant flow matching via symmetrization]
\label{thm:equivariant-symmetrization}
Let \(\pi\in\mathrm{c}_{\R^D}(\mu,\nu)\). Then:
\begin{enumerate}
\item For every \(G\)-equivariant field \(v\),
\(J_\pi(v)=J_{\pi^G}(v)\). The field
\(v^{\pi^G}\) can be chosen \(G\)-equivariant and minimizes
\(J_\pi\) over all \(G\)-equivariant fields.
\item If the ODE of a \(G\)-equivariant field has a unique flow
\(\gamma_t\), then it induces a well-defined quotient flow
\(
\bar\gamma_t([x])\coloneqq[\gamma_t(x)].
\)
For every \(\eta\in\mP_2(\R^D)\),
\(
\mathfrak q_\sharp\gamma_{t,\sharp}\eta
=\bar\gamma_{t,\sharp}\mathfrak q_\sharp\eta.
\)
Hence the projected curve depends only on the quotient law
\(\mathfrak q_\sharp\eta\), not on the representative lift chosen at time
\(0\).
\item If \(G\subset\mathfrak S_D\) permutes coordinates and
\(\nu\) is supported on
\(\mathcal A^D\), \(\mathcal A=\{a_1,\ldots,a_M\}\), let
\[
q_t^{\ast,\pi^G,d}(n\mid z)
\coloneqq
\mathbb P_{\pi^G}(X'_d=a_n\mid Z_t=z).
\]
For \(g\in G\), let \(g\cdot d\) be the coordinate determined by
\((gz)_{g\cdot d}=z_d\). These coordinate posteriors can be chosen
equivariantly for \(d\in[D]\) and \(n\in[M]\):
\[
q_t^{\ast,\pi^G,g\cdot d}(n\mid gz)
=q_t^{\ast,\pi^G,d}(n\mid z).
\]
They minimize \( J_\pi^{\rm coord}\) over equivariant coordinate
kernels and recover \(v^{\pi^G}\) through
\eqref{eq:euclidean-fm-coordinate-endpoint-mixture}.
\end{enumerate}
\end{theorem}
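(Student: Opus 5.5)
The plan is to prove the three parts in order. Part 1 is a change of variables under the Haar average. Part 2 is a uniqueness argument for ODE flows. Part 3 repeats Part 1 with cross-entropy in place of squared error.

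\emph{Part 1.} Fix a \(G\)-equivariant field \(v\), so \(v_t(gz)=gv_t(z)\). Because \(\pr^t\) is linear, \(\pr^t(gx,gx')=g\,\pr^t(x,x')\), and because \(g\) is orthogonal,
\[
\bigl\|v_t(\pr^t(gx,gx'))-(gx'-gx)\bigr\|=\bigl\|v_t(\pr^t(x,x'))-(x'-x)\bigr\|.
\]
The integrand of \(J_\pi\) is therefore diagonally \(G\)-invariant. Integrating against \((g,g)_\sharp\pi\), averaging over Haar measure and applying Fubini (the integrand is nonnegative) gives \(J_{\pi^G}(v)=J_\pi(v)\).

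Next I show the minimizer can be taken equivariant. The coupling \(\pi^G\) is diagonally invariant, so \(\mu_t^G\coloneqq\pr^t_\sharp\pi^G\) is \(G\)-invariant. The conditional expectation \eqref{eq:euclidean-fm-conditional-velocity} then satisfies \(v^{\pi^G}_t(gz)=g\,v^{\pi^G}_t(z)\) for \(\mu^G_t\)-a.e.\ \(z\), for each fixed \(g\). To get a version that is equivariant everywhere, I would replace \(v^{\pi^G}\) by its Reynolds average \(\tilde v_t(z)\coloneqq\mathbb E_g[g^{-1}v^{\pi^G}_t(gz)]\). This \(\tilde v\) is equivariant everywhere, and by invariance of \(\mu^G_t\) together with Fubini it agrees with \(v^{\pi^G}\) almost everywhere. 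Since \(v^{\pi^G}\) minimizes \(J_{\pi^G}\) over all fields, it minimizes over equivariant fields in particular. The equality \(J_\pi=J_{\pi^G}\) on equivariant fields then transfers the minimizing property to \(J_\pi\).

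\emph{Part 2.} For fixed \(g\), the curve \(t\mapsto g\gamma_t(x)\) solves the ODE with initial value \(gx\), because \(\frac{\d}{\d t}\,g\gamma_t(x)=g\,v_t(\gamma_t(x))=v_t(g\gamma_t(x))\). Uniqueness of the flow gives \(\gamma_t(gx)=g\gamma_t(x)\). Hence \(\bar\gamma_t\) is well defined and \(\mathfrak q\circ\gamma_t=\bar\gamma_t\circ\mathfrak q\). Pushing \(\eta\) forward through both sides gives the stated identity. The independence from the choice of representative follows because the right-hand side depends only on \(\mathfrak q_\sharp\eta\).

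\emph{Part 3.} When \(G\) permutes coordinates, \((gx')_{g\cdot d}=x'_d\), so \(\kappa_{g\cdot d}(gx')=\kappa_d(x')\) because the alphabet \(\mathcal A\) is the same for every coordinate. For an equivariant family of kernels this makes the coordinatewise cross-entropy integrand diagonally invariant: the sum over \(d\) is just reindexed by the bijection \(d\mapsto g\cdot d\). The Haar-averaging argument of Part 1 therefore gives \(J^{\rm coord}_\pi=J^{\rm coord}_{\pi^G}\) on equivariant kernels. Diagonal invariance of \(\pi^G\) yields the stated equivariance of the posteriors \(q^{\ast,\pi^G,d}\) almost everywhere. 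A version that is equivariant everywhere is obtained by averaging over \(G\) as in Part 1; this average stays in the simplex \(\Delta_M\) because \(\Delta_M\) is convex. Since the posteriors minimize \(J^{\rm coord}_{\pi^G}\) over all kernels, they minimize \(J^{\rm coord}_\pi\) over equivariant ones. Formula \eqref{eq:euclidean-fm-coordinate-endpoint-mixture} applied to \(\pi^G\) recovers \(v^{\pi^G}\).

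I expect the main obstacle to be the measure-theoretic step of choosing a jointly Borel version that is equivariant everywhere rather than only almost everywhere. Each a.e.\ identity holds for one fixed \(g\), and the exceptional null set depends on \(g\); for compact (not finite) \(G\) one cannot simply take a union over group elements. The Reynolds average together with Fubini on \(G\times\R^D\) handles this. In the cross-entropy case one must additionally check that the averaging does not increase the objective; convexity of \(-\log\) (Jensen) gives this.
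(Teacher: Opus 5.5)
Your proposal is correct and follows essentially the same route as the paper. It Haar-averages the diagonally invariant integrand to get $J_{\pi^G}=J_\pi$ on equivariant fields, and uses a Reynolds average (justified by Tonelli and invariance of $\mu_t^G$) to obtain an everywhere-equivariant version of the minimizer. It uses uniqueness of the ODE flow for the quotient flow, and the coordinate-reindexing argument with a finite group average for the categorical posteriors.

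The paper adds one detail you skip: it checks explicitly that the Haar integral defining $\tilde v$ is finite outside a $G$-invariant null set, using Tonelli and Cauchy--Schwarz, and sets $\tilde v$ to zero there. Your Jensen remark is not needed, because the averaged kernel is simply another version of the same conditional probability and so leaves the objective unchanged.
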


The full proof is given in Appendix~\ref{app:equivariant-theory}.
Together, Theorems~\ref{thm:quotient-lift-geodesic}
and~\ref{thm:equivariant-symmetrization} connect the practical recipe used
below to quotient transport: align representatives and train an equivariant
Euclidean model whose flow, when well posed, descends to unordered graphs

\vspace{-3.5mm}\section{Constructing graph couplings via Gromov--Monge approximation}
\label{sec:graph-couplings}
For \(C=C_{\rm e}+C_{\rm v}\), encode a graph by
\(E\in(\R^C)^{N\times N}\) with entries
\[
E_{ik}=\left(
\sqrt{\lambda_{\rm edge}}\,e^E_{ik},
\mathbf 1_{\{i=k\}}
\sqrt{\frac{\lambda_{\rm node}}{C_{\rm v}}}\,f_i^E
\right),
\]
where \(e^E_{ik}\in\R^{C_{\rm e}}\) contains edge features and
\(f_i^E\in\R^{C_{\rm v}}\) contains node features on the diagonal. Hence edge
and node-feature discrepancies enter the squared Euclidean cost with
weights \(\lambda_{\rm edge}\) and \(\lambda_{\rm node}\), respectively, as in
fused Gromov--Wasserstein transport \citep{vayer2020fused}. We use
\(\lambda_{\rm edge}=\lambda_{\rm node}=\tfrac12\) in all experiments. Because
the same permutation acts on both node indices, the quotient cost
\eqref{eq:gromov_monge_as_euclidean_quoitient} compares all pairwise graph
relations after a single consistent node alignment. Its exact evaluation is an
NP-hard quadratic assignment, so we combine an inner Gromov--Wasserstein
relaxation \citep{bauer2025zgw} with an optional outer assignment between graphs in a minibatch.

\vspace{-3mm}\paragraph{Choosing a target representative with GW -- inner solvers}
For \(E,F\in(\R^C)^{N\times N}\), we use a generalized version \citep{bauer2025zgw} of the discrete
Gromov--Wasserstein relaxation
\citep{memoli2011gromov,peyre2016gromov}, allowing for vector-valued edge features:
\begin{equation}
\label{eq:def_gromov_wasserstein}
\operatorname{GW}_{2,Z}^2([E],[F])
\coloneqq
\min_{T\in\mathcal U_N}
\sum_{i,j,k,l=1}^N
\|E_{ik}-F_{jl}\|^2T_{ij}T_{kl}.
\end{equation}
where
\(\mathcal U_N=\{T\ge0:T\mathbf1=T^\top\mathbf1=N^{-1}\mathbf1\}\).
Restricting \(T\) to scaled permutation matrices recovers the Gromov--Monge
objective, so \(\operatorname{GW}_{2,Z}^2\le
\GM_{2,Z}^2/N^2\). We approximately solve the non-convex problem
by Frank--Wolfe iterations \citep{peyre2016gromov}: each iteration solves an exact linearized transport
problem and updates the soft plan by an exact line search. From the final plan
\(T\), we recover the node permutation
\[
\widehat\sigma
\in\operatorname*{arg\,max}_{\sigma\in\mathfrak S_N}
\sum_{i=1}^N T_{i,\sigma(i)},
\]
using the Hungarian algorithm \citep{kuhn1955hungarian}. Equivalently,
\(N^{-1}P_{\widehat\sigma}\) is the scaled permutation matrix closest to
\(T\) in Frobenius norm, where \(P_\sigma\) denotes the permutation matrix of
\(\sigma\). The GW plan tells us how to relabel \(F\). Using the recovered
permutation, define the aligned representative by
\[
F^{\mathrm{aligned}}_{ij}
\coloneqq F_{\widehat\sigma(i)\widehat\sigma(j)}.
\]
We then train flow matching on \((E,F^{\mathrm{aligned}})\). Since GW is a
relaxation, this representative need not be the best solution of the original
hard alignment problem.

As a cheaper alternative, let
\(\operatorname{ecc}_E(i)=(N^{-1}\sum_k\|E_{ik}\|^2)^{1/2}\), and let
\(\operatorname{ecc}_E^\uparrow(i)\) denote the \(i\)-th smallest value among
 node eccentricities of \(E\). This gives the first lower
bound (\textsc{FLB}) \citep{bauer2025zgw}
\[
\operatorname{FLB}_{2,Z}^2([E],[F])
\coloneqq\frac1N\sum_i
|\operatorname{ecc}_E^\uparrow(i)-\operatorname{ecc}_F^\uparrow(i)|^2
\le\operatorname{GW}_{2,Z}^2([E],[F]).
\]
This is the squared \(2\)-Wasserstein distance between the empirical
eccentricity distributions. We sort the nodes of \(E\) and \(F\) by their
eccentricities and pair the \(i\)-th node in one ordering with the \(i\)-th node
in the other, breaking ties arbitrarily. These pairs define a permutation. Computing the eccentricities takes
\(O(N^2C)\) operations, and sorting takes \(O(N\log N)\). Thus \textsc{GW}
retains detailed pairwise structure at higher computational cost, whereas
\textsc{FLB} provides an inexpensive inner alignment and is cheap to
evaluate for all \(B^2\) candidate pairs in the outer minibatch assignment.
Their empirical cost--quality trade-off is reported in
Appendix~\ref{app:solver-cost}. 

\vspace{-3mm}\paragraph{Outer--inner minibatch coupling}
For a graph pair \((E,F)\), let \(\widehat c([E],[F])\) denote the approximate
quotient cost returned by the selected inner solver, and let
\(\widehat\sigma(E,F)\in\mathfrak S_N\) denote its corresponding hard node
alignment. Thus \(\widehat\sigma(E,F)^{-1}\cdot F\) is the target representative
aligned to \(E\).
For independently sampled batches \((E^a)_{a=1}^B\) and
\((F^b)_{b=1}^B\), either set \(\widehat\tau(a)=a\) or solve
\[
\widehat\tau
\in\argmin_{\tau\in\mathfrak S_B}\sum_a
\widehat c([E^a],[F^{\tau(a)}]),
\]
where \(\widehat c\) is the GW or FLB cost. For each selected pair, set
\(\widehat\sigma_a=\widehat\sigma(E^a,F^{\widehat\tau(a)})\), defining
\[
\widehat\pi_B
\coloneqq\frac1B\sum_a
\delta_{(E^a,\widehat\sigma^{-1}_a\cdot F^{\widehat\tau(a)})}.
\]
Thus inner alignment chooses graph representatives, while the optional outer
assignment chooses which graphs to connect. These are distinct decisions: even
with independent outer pairing, an inner alignment can substantially shorten
the interpolation between each selected pair. Algorithm~\ref{alg:minibatch_alignment}
summarizes the complete construction.

\begin{algorithm}[H]
\caption{Outer--inner minibatch graph alignment}
\label{alg:minibatch_alignment}
\begin{algorithmic}[1]
\REQUIRE Source batch $(E^a)_{a=1}^B$, target batch
$(F^b)_{b=1}^B$, outer cost $\widehat c$
\IF{the outer coupling is independent}
    \STATE $\widehat\tau(a)\gets a$ for all $a=1,\ldots,B$
\ELSE
    \STATE $C_{ab}\gets\widehat c([E^a],[F^b])$ for all $a,b$
    \STATE $\widehat\tau\gets
    \argmin_{\tau\in\mathfrak S_B}\sum_a C_{a,\tau(a)}$
\ENDIF
\FOR{$a=1,\ldots,B$}
    \STATE $G^a\gets F^{\widehat\tau(a)}$
    \STATE Obtain $\widehat\sigma_a$ using GW, FLB, or random reshuffling
    \STATE $\widetilde F^a\gets\widehat\sigma^{-1}_a\cdot G^a$
\ENDFOR
\STATE \textbf{return} $(E^a,\widetilde F^a)_{a=1}^B$
\end{algorithmic}
\end{algorithm}

\vspace{-3mm}\paragraph{Training}
Write \(\widetilde F^a=\widehat\sigma^{-1}_a\cdot
F^{\widehat\tau(a)}\), sample \(t^a\sim\mathcal U[0,1]\), and set
\(E_{t^a}^a=(1-t^a)E^a+t^a\widetilde F^a\). The continuous and categorical
minibatch losses are, respectively,
\begin{equation}
\label{eq:minibatch-fm-loss}
J_B(v^{\theta})
\coloneqq
\frac1B\sum_{a=1}^B
\left\|
v^{\theta}(t^a,E_{t^a}^a)-(\widetilde F^a-E^a)
\right\|^2.
\end{equation}
\begin{equation}
\label{eq:minibatch-endpoint-loss}
 J_B^{\rm coord}\bigl((q_\theta^d)_{d=1}^D\bigr)
\coloneqq
-\frac1B\sum_{a=1}^B
\sum_{d=1}^D
\log q_{\theta,t^a}^d\!\left(
\kappa_d(\widetilde F^a)\mid E_{t^a}^a
\right).
\end{equation}
The categorical velocity follows from
\eqref{eq:euclidean-fm-coordinate-endpoint-mixture}. For datasets containing
graphs with different numbers of nodes, one graph transformer is shared across
all node counts: it maps an \(N\)-node input to an \(N\)-node prediction, while
\(N\) may vary between batches. We form each training batch from graphs with
the same \(N\). At inference, we sample \(N\) from the empirical distribution
of node counts in the training set. The graph transformer is permutation-equivariant, ensuring that relabelling the
source relabels the entire predicted trajectory. Architecture and solver details are in
Appendix~\ref{app:repro}.

\vspace{-3.5mm}\section{Numerical experiments}
\label{sec:experiments}

We evaluate the proposed coupling constructions in a permutation-equivariant
graph flow-matching model. We compare random node relabelling
(\textsc{Random}), eccentricity sorting (\textsc{FLB}), and a 10-iteration
Frank--Wolfe GW solver (\textsc{GW}); ``$+\mathrm{out}$'' additionally reorders
same-size graphs in sub-batches of at most eight. 
As a permutation-blind reference, \textsc{MinibatchOT} relabels each target at
random and then assigns pairs over the full minibatch by squared Euclidean
distance \(\|E-F\|^2\), with no inner alignment.
All models share the same
equivariant graph-transformer backbone and differ only in their training
coupling. We consider several Euler budgets to assess whether structure-aware
alignment improves few-step generation. Complete architecture, optimization,
data, solver, and metric specifications are given in
Appendix~\ref{app:repro}.

For variable-size datasets, the population construction first draws the node
count $N\sim p_N$ from its empirical distribution. Conditional on $N$,
\textsc{Random} uses the product coupling $\mu_N\otimes\nu_N$, where
$\mu_N$ and $\nu_N$ denote the source and target laws for $N$ nodes.
Inner \textsc{FLB}/\textsc{GW} changes only the representative of each
selected target graph, whereas the outer variants additionally optimize the
source--target pairing within the fixed-$N$ minibatch. Hence, improvements
from inner alignment measure the value of node correspondence, while
differences between inner-only and outer variants measure the additional
effect of minibatch transport.

\vspace{-3.5mm}\subsection{Illustrative target via translated cycle graphs}
\label{subsec:circles}

Inspired by \citep{piening2026generalized_wow_fm}, each graph is a \(12\)-node ring drawn as a regular \(12\)-gon of radius \(1\)
centred at \((h,y)\), with the nodes randomly relabelled. The representation \(E\in\mathbb
R^{12\times12\times3}\) holds one binary adjacency channel and the planar 2D node
positions on the diagonal. Source and target differ only by a translation
(\(h\sim\mathcal U[-6,6]\) and an independent rotation on each side, at \(y=0\)
and \(y=8\)), so both sides have the same edge law and any deformation of the
ring is caused by the coupling alone. 
Figure~\ref{fig:circle_graph_trajs} shows the learned trajectories at $t=0$ (bottom), $t=0.5$ (center), and $t=1$ (top). Inner \textsc{GW} pairs corresponding nodes between two rings. This results in a fixed adjacency channel along the
interpolation. Under random inner relabelling, the two sets of edges disagree, so the
endpoints are still correct, but the intermediate graphs in between are much denser, carrying
half-present edges in place of crisp circular ones.
Inner
\textsc{GW} removes the edge ambiguity and keeps the ring at full size midway, 
but leaves the paths curved. The outer assignment
straightens them. The
inner alignment thus governs the shape of the transported object, the outer
assignment the geometry of its path. 

\begin{figure}[h]
    \centering
    \begin{subfigure}[t]{0.3\textwidth}
        \centering
    \fbox{\includegraphics[width=\textwidth]{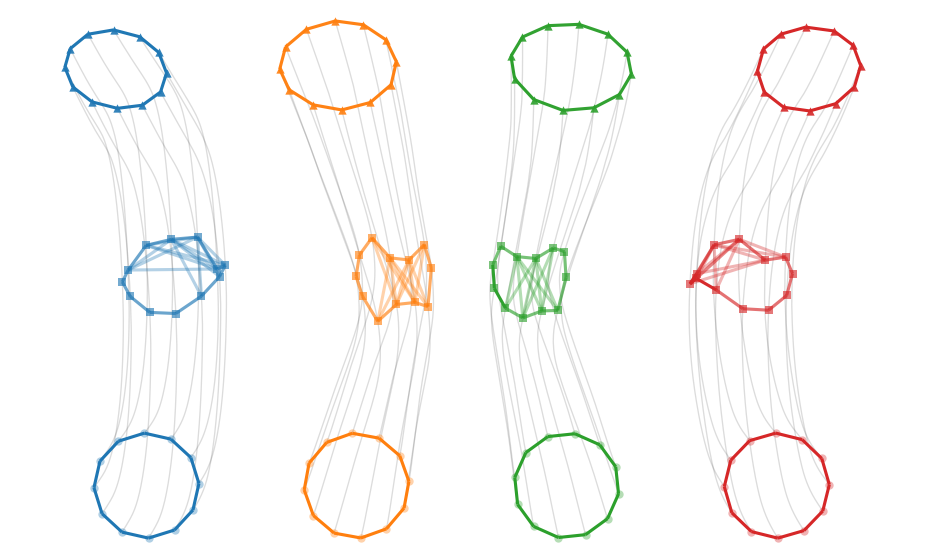}}
        \caption{\textsc{Random}}
        \label{subfig:circles_ind_ind}
    \end{subfigure}
    \hfill
    \begin{subfigure}[t]{0.3\textwidth}
        \centering
    \fbox{\includegraphics[width=\textwidth]{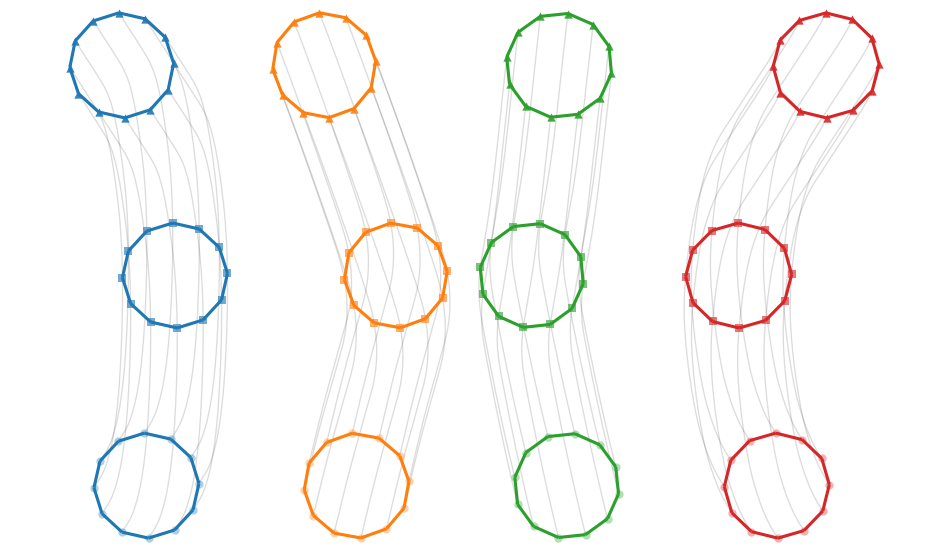}}
        \caption{\textsc{GW}}
        \label{subfig:circles_ind_gw}
    \end{subfigure}
    \hfill
    \begin{subfigure}[t]{0.3\textwidth}
        \centering
    \fbox{\includegraphics[width=\textwidth]{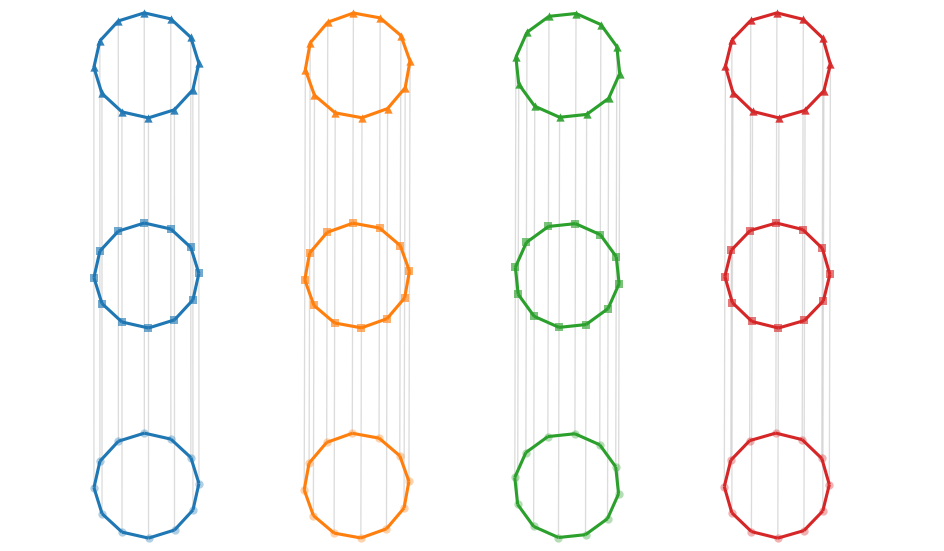}}
        \caption{\textsc{GW}$+$\textsc{GW}$_{\mathrm{out}}$}
        \label{subfig:circles_gw_gw}
    \end{subfigure}
    \caption{Learned trajectories from cycle graphs (bottom) to vertically offset
    cycle graphs (top), on a shared scale. Colours mark source graphs, grey lines node paths, and edge opacity equals edge value. The ring
    collapses at \(t=1/2\) in \ref{subfig:circles_ind_ind}, stays
    crisp but on curved paths in \ref{subfig:circles_ind_gw}, and is straightened
    in \ref{subfig:circles_gw_gw}.}
    \label{fig:circle_graph_trajs}
\end{figure}

\vspace{-3.5mm}\subsection{Continuous target via stochastic block models}
\label{subsec:sbm}

\vspace{-1mm}\paragraph{Data}
As a controlled continuous testbed, we use an SBM on \(N=10\) nodes with two
channels: one symmetric weighted-edge channel in \([0,1]\) and one scalar node
feature on the diagonal. For \(K\in\{1,\ldots,5\}\), nodes are assigned to
near-balanced blocks; for every \(i<j\), we sample and mirror
\(e^E_{ij}=e^E_{ji}\) from \(\mathrm{Beta}(6,2)\) within a block and
\(\mathrm{Beta}(2,6)\) across blocks. A node in block
\(k\in\{0,\ldots,K-1\}\) receives a Beta
feature with concentration \(8\) and mean
\(\mu_k=(2k+1)/(2K)\), so the diagonal channel also records community
structure. We randomize node order, making each tensor an arbitrary
representative of its graph orbit. The source has independent
\(\mathcal U[0,1]\) upper-triangular edges, mirrored across the diagonal, and
independent \(\mathcal U[0,1]\) node features. The main experiment pools all
\(K\). Appendix~\ref{app:conditional-sbm} instead supplies \(K\) to the model
and aligns only within minibatches containing a single value of \(K\) to showcase conditional instead of unconditional generation.

\vspace{-3mm}\paragraph{Metrics}
For 100 real and 100 generated graphs, we estimate maximum mean discrepancies (MMDs) of degree, clustering, and
four-node graphlet-orbit counts, which distinguish the roles a node can occupy
in induced four-node subgraphs \citep{jo2022gdss,you2018graphrnn}. 
All three require unweighted graphs, obtained by thresholding real and
generated edges alike at $e^E_{ij}>1/2$.
To compare
the full attributed graphs including their node features, we
also report fused-GW nearest-neighbor accuracy (FGW--NNA).
This statistic builds on the popular OT-NNA statistic used to compare real and generated sets of point clouds
\citep{yang2019pointflow_ot_nn} by replacing pairwise Wasserstein distance with pairwise FGW. Given 100 generated and
100 real graphs, we pool both samples and label each graph by its origin. We then
classify every graph by the label of its nearest neighbor under fused GW,
excluding the graph itself. Because fused GW compares attributed graphs up to
node relabelling, this is a two-sample test directly on graph orbits rather
than on selected descriptors. If real and generated graphs follow the same
law, the accuracy approaches \(0.5\). Larger values mean that the samples are
more easily separated. Results are reported over three training runs with different seeds and evaluated
at \(5/25/125\) Euler steps.
\vspace{-3mm}\paragraph{Results}
At five steps (Table~\ref{tab:sbm}), GW with outer assignment reduces FGW--NNA
from \(0.796\) to \(0.568\), degree MMD from \(0.114\) to \(0.018\), and
clustering MMD from \(0.179\) to \(0.034\). The gap narrows by 125 steps, as
expected because all matching should result in valid interpolations. FLB
captures part of the low-step gain at much lower cost, while outer assignment
adds a smaller improvement.
The permutation-blind \textsc{MinibatchOT} already matches or exceeds the
\textsc{FLB} variants on most statistics, yet remains clearly behind the GW
variants.
The conditional experiment likewise favors GW alignment (Appendix~\ref{app:conditional-sbm}). Notably, at five steps the two
GW variants are also the only conditions that bring FGW--NNA close to its ideal
value of \(0.5\). Thus, the improvement is visible not merely in selected graph
statistics, but in a two-sample test using the underlying relational geometry.

\begin{table}[H]
\centering
\footnotesize
\setlength{\tabcolsep}{3pt}
\caption{Continuous SBM, \emph{unconditional} mixture over $K\in\{1,\dots,5\}$
($N=10$), at $5/25/125$ Euler steps. Rows are mean$_{\text{std}}$ over $3$
training seeds ($5$ evaluation repeats each). Best
value per column is bold (MMDs (degree,
clustering, graphlet orbit) $\downarrow$, Best FGW--NNA: $\sim 0.5$. }
\label{tab:sbm}
\resizebox{\linewidth}{!}{%
\begin{tabular}{l ccc ccc ccc ccc}
\toprule
& \multicolumn{3}{c}{FGW--NNA ($\to 0.5$)}
& \multicolumn{3}{c}{Degree MMD ($\downarrow$)}
& \multicolumn{3}{c}{Clustering MMD ($\downarrow$)}
& \multicolumn{3}{c}{Graphlet-orbit MMD ($\downarrow$)}\\
\cmidrule(lr){2-4}\cmidrule(lr){5-7}\cmidrule(lr){8-10}\cmidrule(lr){11-13}
Method & $5$ & $25$ & $125$ & $5$ & $25$ & $125$ & $5$ & $25$ & $125$ & $5$ & $25$ & $125$\\
\midrule
\textsc{Random}
 & $0.796_{.012}$ & $0.581_{.021}$ & $0.561_{.015}$
 & $0.114_{.045}$ & $0.067_{.016}$ & $0.059_{.011}$
 & $0.179_{.100}$ & $0.085_{.042}$ & $0.075_{.028}$
 & $0.062_{.033}$ & $0.046_{.033}$ & $0.043_{.025}$\\
 \textsc{MinibatchOT}
 & $0.687_{.018}$ & $0.561_{.016}$ & $0.530_{.007}$
 & $0.031_{.004}$ & $0.023_{.006}$ & $0.015_{.005}$
 & $0.090_{.014}$ & $0.046_{.010}$ & $0.037_{.010}$
 & $0.006_{.001}$ & $0.011_{.001}$ & $0.007_{.000}$\\
\textsc{FLB}
 & $0.725_{.013}$ & $0.558_{.011}$ & $0.555_{.022}$
 & $0.054_{.004}$ & $0.037_{.011}$ & $0.030_{.010}$
 & $0.109_{.037}$ & $0.042_{.006}$ & $0.040_{.011}$
 & $0.030_{.014}$ & $0.027_{.017}$ & $0.021_{.015}$\\
\textsc{FLB}$+$\textsc{FLB}$_{\mathrm{out}}$
 & $0.691_{.040}$ & $0.562_{.019}$ & $0.547_{.009}$
 & $0.034_{.011}$ & $\mathbf{0.012}_{.002}$ & $0.013_{.007}$
 & $0.121_{.041}$ & $0.045_{.024}$ & $0.047_{.021}$
 & $0.018_{.011}$ & $\mathbf{0.007}_{.002}$ & $0.010_{.005}$\\
\textsc{GW}
 & $0.577_{.011}$ & $\mathbf{0.522}_{.004}$ & $0.520_{.023}$
 & $0.027_{.013}$ & $0.014_{.002}$ & $0.012_{.004}$
 & $0.049_{.019}$ & $\mathbf{0.015}_{.005}$ & $0.016_{.007}$
 & $0.014_{.004}$ & $0.010_{.003}$ & $0.008_{.001}$\\
\textsc{GW}$+$\textsc{GW}$_{\mathrm{out}}$
 & $\mathbf{0.568}_{.008}$ & $0.536_{.014}$ & $\mathbf{0.504}_{.021}$
 & $\mathbf{0.018}_{.005}$ & $0.014_{.002}$ & $\mathbf{0.007}_{.001}$
 & $\mathbf{0.034}_{.007}$ & $0.021_{.004}$ & $\mathbf{0.014}_{.002}$
 & $\mathbf{0.005}_{.001}$ & $0.008_{.001}$ & $\mathbf{0.004}_{.000}$\\
\bottomrule
\end{tabular}
}
\end{table}

\vspace{-3.5mm}\subsection{Categorical target via molecular graph generation}
\label{subsec:molecular}
\vspace{-1mm}\paragraph{Data}
We use QM9 (\(N\le9\)) \citep{ramakrishnan2014quantum_qm9} and ZINC250k (\(N\le38\)) \citep{jin2018junction_zinc} in the categorical
CatFlow/DiGress representation \citep{eijkelboom2024variational,
vignac2023digress}. Edges are one-hot over
\(\{\text{none},\text{single},\text{double},\text{triple}\}\), while each
diagonal node feature concatenates a one-hot atom type with a formal-charge
one-hot over \(\{-1,0,+1\}\). QM9 uses atom types
\(\{\mathrm C,\mathrm N,\mathrm O,\mathrm F\}\), giving \(C=11\); ZINC250k
uses \(\{\mathrm C,\mathrm N,\mathrm O,\mathrm F,\mathrm{Br},\mathrm{Cl},
\mathrm I,\mathrm P,\mathrm S\}\), giving \(C=16\). The network predicts the
coordinatewise categorical endpoint and minimises a blockwise softmax analogue of
\eqref{eq:minibatch-endpoint-loss}. Its velocity is recovered through
\eqref{eq:euclidean-fm-coordinate-endpoint-mixture}. We group minibatches by
node count and apply the same outer--inner alignment as for the continuous
experiment.
\vspace{-3mm}\paragraph{Metrics}
Following the GDSS/CatFlow evaluation protocols
\citep{eijkelboom2024variational,jo2022gdss}, we report
uncorrected molecular validity---the fraction of generated graphs that RDKit
sanitizes without valence correction or other postprocessing---and Fr\'echet
ChemNet Distance (FCD); higher is better for validity and lower is better for
FCD. We omit novelty because QM9 is a near-exhaustive enumeration
of small molecules: a high novelty score can therefore reward atypical or
lower-quality structures rather than faithful modelling. Uniqueness is close
to \(100\%\) for all structure-aware conditions, while additional invalid or
atypical outputs can increase the apparent uniqueness of \textsc{Random};
reporting it would therefore favor the weakest coupling. Validity and FCD
capture chemical correctness and distributional fidelity, respectively;
complete definitions are in
Appendix~\ref{app:repro}.
\subsubsection{Limited-budget sweep of the endpoint-induced velocity field}
We train the shared 2.8M-parameter backbone for 100 epochs under each coupling
and evaluate 10,000 samples at \(5/25/125\) Euler steps; complete settings are
in Appendix~\ref{app:repro}.

\begin{table}[H]
  \centering
  \footnotesize
  \setlength{\tabcolsep}{5pt}
    \caption{Molecular short-budget sweep ($100$ training epochs), evaluated at
    $5/25/125$ Euler steps. Rows are mean$_{\text{std}}$ over $3$ evaluation repeats. Best per column is bold (Validity $\uparrow$,
    FCD $\downarrow$).}
  \label{tab:mol-sweep}
  \begin{tabular}{l ccc ccc}
  \toprule
  & \multicolumn{3}{c}{Validity ($\uparrow$)}
  & \multicolumn{3}{c}{FCD ($\downarrow$)}\\
  \cmidrule(lr){2-4}\cmidrule(lr){5-7}
  Method & $5$ & $25$ & $125$ & $5$ & $25$ & $125$\\
  \midrule
  \multicolumn{7}{l}{\emph{QM9} ($N\le9$)}\\
  \textsc{Random}
   & $0.8849_{.0020}$ & $0.9564_{.0002}$ & $0.9648_{.0010}$
   & $1.731_{.017}$ & $0.637_{.022}$ & $0.530_{.012}$\\
   \textsc{MinibatchOT}
 & $0.9357_{.0029}$ & $0.9724_{.0010}$ & $0.9779_{.0003}$
 & $1.747_{.024}$ & $0.689_{.014}$ & $0.507_{.035}$\\
  \textsc{GW}
   & $0.9356_{.0020}$ & $0.9695_{.0010}$ & $0.9744_{.0014}$
   & $1.278_{.046}$ & $0.589_{.015}$ & $0.491_{.022}$\\
  \textsc{GW}$+$\textsc{GW}$_{\mathrm{out}}$
   & $\mathbf{0.9421}_{.0024}$ & $\mathbf{0.9728}_{.0011}$ & $\mathbf{0.9783}_{.0009}$
   & $\mathbf{1.232}_{.017}$ & $\mathbf{0.532}_{.014}$ & $\mathbf{0.440}_{.016}$\\
  \midrule
  \multicolumn{7}{l}{\emph{ZINC250k} ($N\le38$)}\\
  \textsc{Random}
   & $0.5641_{.0048}$ & $0.8140_{.0034}$ & $0.8597_{.0011}$
   & $18.448_{.095}$ & $13.061_{.162}$ & $11.909_{.077}$\\
   \textsc{MinibatchOT}
 & $0.6106_{.0025}$ & $0.8348_{.0033}$ & $0.8667_{.0026}$
 & $17.190_{.075}$ & $12.686_{.008}$ & $11.744_{.167}$\\
  \textsc{GW}
   & $0.6431_{.0029}$ & $0.8472_{.0052}$ & $0.8742_{.0009}$
   & $\mathbf{15.024}_{.076}$ & $\mathbf{11.993}_{.030}$ & $\mathbf{11.365}_{.073}$\\
  \textsc{GW}$+$\textsc{GW}$_{\mathrm{out}}$
   & $\mathbf{0.7280}_{.0055}$ & $\mathbf{0.8763}_{.0026}$ & $\mathbf{0.8930}_{.0022}$
   & $15.377_{.048}$ & $12.116_{.025}$ & $11.370_{.059}$\\
  \bottomrule
  \end{tabular}
  \end{table}

  \vspace{-3mm}\vspace{-3mm}\paragraph{Results}
At five steps (Table~\ref{tab:mol-sweep}), inner GW raises validity from
\(0.8849\) to \(0.9356\) on QM9 and from \(0.5641\) to \(0.6431\) on ZINC250k,
while reducing FCD from \(1.731\) to \(1.278\) and from \(18.448\) to
\(15.024\). Outer assignment further improves validity, whereas its FCD gain is
dataset dependent: it is uniformly best on QM9 and gives the highest ZINC250k
validity, while inner-only GW has marginally better ZINC250k FCD. \textsc{MinibatchOT} matches inner GW on QM9 validity but not on FCD; on
ZINC250k it sits strictly between \textsc{Random} and both GW variants. Again, differences narrow with more steps.

\subsubsection{Full-budget molecular generation}
We train larger GW-aligned endpoint models without outer alignment and evaluate the resulting \textsc{GW-CatFlow} model with 500 Euler
steps. The complete protocol is in Appendix~\ref{app:full-budget-protocol}.

\vspace{-3mm}\paragraph{Results}
Table~\ref{tab:mol-sota} gives \(99.34\%\) validity and \(0.115\) FCD on QM9,
comparable to DeFoG, and \(99.01\%\) validity with \(0.966\) FCD on ZINC250k,
the lowest FCD listed. Because baselines follow their published protocols and
our full-budget model includes additional architectural changes, this table
establishes competitiveness rather than isolating the effect of alignment.
Together with the controlled short-budget sweep, it shows that the proposed
coupling improves the regime it is designed for without preventing the model
from reaching strong quality at a conventional, large integration budget.

\begin{table}[H]
\centering
\footnotesize
\setlength{\tabcolsep}{5pt}
\caption{Molecular long-budget generation with GW inner alignment. Ours:
$500$ Euler steps, $10{,}000$ samples, mean$_{\rm std}$ over three repeats.
Baselines follow their published protocols; GraphAF, MoFlow, and DiGress are
quoted from \protect\citep{hou2026ggflow}, DeFoG from
\protect\citep{xiong2026variational}. Validity $\uparrow$, FCD $\downarrow$.}
\label{tab:mol-sota}
\begin{tabular}{l cc cc}
\toprule
& \multicolumn{2}{c}{QM9}
& \multicolumn{2}{c}{ZINC250k}\\
\cmidrule(lr){2-3}\cmidrule(lr){4-5}
Method
& Validity ($\uparrow$) & FCD ($\downarrow$)
& Validity ($\uparrow$) & FCD ($\downarrow$)\\
\midrule
GraphAF \citep{hou2026ggflow,shi2020graphaf}
& $67.14$ & $5.246$
& $67.92$ & $16.128$\\

MoFlow \citep{hou2026ggflow,zang2020moflow}
& $92.03$ & $4.536$
& $63.76$ & $20.875$\\

GDSS \citep{jo2022gdss}
& $95.72$ & $2.900$
& $97.01$ & $14.656$\\

CatFlow/VFM \citep{eijkelboom2024variational}
& $99.81$ & $0.441$
& $99.21$ & $13.211$\\

DiGress \citep{hou2026ggflow,vignac2023digress}
& $98.29$ & $0.095$
& $94.98$ & $3.482$\\

GGFlow \citep{hou2026ggflow}
& $99.91$ & $0.148$
& ${99.63}$ & $1.455$\\

DeFoG \citep{xiong2026variational,yimingdefog_discrete_fm}
& ${99.30}$ & ${0.120}$
& ${99.22}$ & ${1.425}$\\

VBFN \citep{xiong2026variational}
& ${99.98}$ & ${0.083}$
& ${99.63}$ & ${1.307}$\\

\midrule
\textsc{GW-CatFlow} (ours)
  & $99.34_{.08}$ & $0.115_{.004}$
  & $99.01_{.03}$ & ${0.966}_{.013}$\\
\bottomrule
\end{tabular}
\end{table}
\vspace{-3.5mm}\section{Conclusion}
\vspace{-2.5mm}

We specialise equivariant flow matching to graphs via Gromov--Wasserstein
alignments: the inner alignment reorders each target's nodes, the outer
assignment reorders the minibatch. Both simplify trajectories and improve
few-step generation over random assignment or permutation-blind minibatch OT. GW solvers remain
the limitation: only approximately Gromov--Monge and expensive, though absent
at inference. Future work may include accelerated GW approximations
\citep{beier2021linear,chowdhury2021quantized,piening2025novel,jin2022orthogonal,scetbon2022lowrankgw} and
semi-discrete formulations beyond minibatches
\citep{kong2025alignflow_semidiscre,mousavi2025flow_semidiscre}.

\bibliographystyle{plainnat}
\bibliography{references}
\clearpage
\appendix

\vspace{-3.5mm}\section{Proof of representative lifts and quotient geodesics}
\label{app:measure-transport}
This section proves Theorem~\ref{thm:quotient-lift-geodesic}. We first recall
the metric notions used in the geodesic part of the proof. A curve
$(\mu_t)_{t\in I}$ belongs to
$AC_I^2(\mP_2(X))$ if there exists $m\in L^2(I)$ such that
\[
    \W_{2,X}(\mu_s,\mu_t)
    \leq \int_s^t m(r)\,\d r
    \qquad\text{for all }0\leq s\leq t\leq1.
\]

Let $G \subset O(D)$ be compact and define
\[
    Q \coloneqq \R^D/G,
    \qquad
    [x] \coloneqq \{gx : g \in G\},
\]
with quotient map
\[
    \mathfrak{q} : \R^D \to Q,
    \qquad
    \mathfrak{q}(x)=[x].
\]
The quotient is equipped with the metric
\[
    d_{Q}([x],[y])
    \coloneqq
    \min_{g \in G} \|x-gy\|.
\]
Since $G$ is compact and acts by isometries, the minimum is attained
and $d_Q$ defines a metric on $Q$.  Since $G$ is compact and acts isometrically, $(Q,d_Q)$ is Polish, and $\mathfrak{q}$ is $1$-Lipschitz. 

For $\bmu,\bnu\in\mP_2(Q)$, the corresponding Wasserstein distance is
\begin{equation}
\label{eq:quotient_wasserstein}
\W_{2,Q}^2(\bmu,\bnu)
\coloneqq
\min_{\Gamma \in \mathrm{c}_{Q}(\bmu,\bnu)}
\int_{Q \times Q}
d_{Q}(\bx,\by)^2
\,\d\Gamma(\bx,\by),
\end{equation}
where
\[
\mathrm{c}_{Q}(\bmu,\bnu)
\coloneqq
\{ \Gamma \in \mP_2(Q \times Q):
\mathrm{proj}^0_\sharp \Gamma = \bmu,\mathrm{proj}^1_\sharp \Gamma = \bnu\}.
\]
By $\mathrm{c}_{Q}^{{\rm opt}}(\bmu,\bnu)$ we denote the couplings which minimize \eqref{eq:quotient_wasserstein}.

\begin{proposition}[Representative lifting of quotient couplings] 
\label{prop:representative-lifting}
Let \(\bmu,\bnu\in\mP_2(Q)\), fix \(\mu\in[\bmu]\), and let
\(\Gamma\in\mathrm{c}_Q(\bmu,\bnu)\). Then
\[
\int_{Q\times Q} d_Q(\bx,\by)^2\,\d\Gamma(\bx,\by)
=
\min_{\substack{
\nu\in[\bnu],\,\gamma\in\mathrm{c}_{\R^D}(\mu,\nu)\\
(\mathfrak q,\mathfrak q)_\sharp\gamma=\Gamma}}
\int_{\R^D\times\R^D}\|x-y\|^2\,\d\gamma(x,y).
\]
In particular, if \(\Gamma\in\mathrm{c}_Q^{\rm opt}(\bmu,\bnu)\) and
\((\nu_\Gamma,\gamma_\Gamma)\) attains the minimum on the right-hand side,
then \(\gamma_\Gamma\in\mathrm{c}_{\R^D}^{\rm opt}(\mu,\nu_\Gamma)\) and
\(\W_{2,\R^D}^2(\mu,\nu_\Gamma)=\W_{2,Q}^2(\bmu,\bnu)\).
\end{proposition}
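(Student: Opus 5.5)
The plan is to prove the two inequalities separately and then read off the optimality statement.

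\emph{Lower bound ($\le$ direction of the minimum).} Take any admissible pair $(\nu,\gamma)$ with $\nu\in[\bnu]$, $\gamma\in\mathrm{c}_{\R^D}(\mu,\nu)$ and $(\mathfrak q,\mathfrak q)_\sharp\gamma=\Gamma$. Since $\|x-y\|\ge\min_{g}\|x-gy\|=d_Q([x],[y])$ pointwise, the change of variables $(\mathfrak q,\mathfrak q)_\sharp\gamma=\Gamma$ gives
\[
\int\|x-y\|^2\,\d\gamma\ \ge\ \int d_Q(\mathfrak q x,\mathfrak q y)^2\,\d\gamma=\int d_Q^2\,\d\Gamma .
\]
So the left-hand side is bounded above by the infimum over admissible pairs.

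\emph{Construction of a lift attaining the cost.} This is the main step and I expect measurability to be the obstacle. The plan has three parts.

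First, disintegrate $\mu$ along $\mathfrak q$. Since $\mathfrak q_\sharp\mu=\bmu$, there are kernels $\bx\mapsto\mu^{\bx}$ on the fibres $\mathfrak q^{-1}(\bx)$, and $\mu=\int\mu^{\bx}\,\d\bmu(\bx)$.

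Second, disintegrate $\Gamma$ with respect to its first marginal, giving $\Gamma=\int\delta_{\bx}\otimes\Gamma^{\bx}\,\d\bmu(\bx)$. Then build the intermediate measure
\[
\Lambda\coloneqq\int\mu^{\bx}\otimes\Gamma^{\bx}\,\d\bmu(\bx)\in\mP(\R^D\times Q).
\]
It has first marginal $\mu$, and pushing forward by $(\mathfrak q,\mathrm{id})$ returns $\Gamma$.

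Third, for $(x,\by)$ choose a representative $y\in\by$ with $\|x-y\|=d_Q([x],\by)$. To do this, fix a Borel section $s:Q\to\R^D$ of $\mathfrak q$ and minimize $g\mapsto\|x-g\,s(\by)\|$ over the compact set $G$. The argmin correspondence is nonempty, compact-valued and has closed graph. By the Kuratowski--Ryll-Nardzewski measurable selection theorem (or via the measurable maximum theorem), we obtain a Borel map $T(x,\by)\in\by$ attaining the minimum. A Borel section $s$ also exists by measurable selection, since the orbits are compact and the orbit relation is closed.

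Set $\gamma\coloneqq(\mathrm{proj}^0,T)_\sharp\Lambda$ and $\nu\coloneqq\mathrm{proj}^1_\sharp\gamma$. Then $\gamma$ has first marginal $\mu$. Because $\mathfrak q\circ T(x,\by)=\by$, we get $(\mathfrak q,\mathfrak q)_\sharp\gamma=(\mathfrak q,\mathrm{id})_\sharp\Lambda=\Gamma$, hence $\mathfrak q_\sharp\nu=\bnu$ and $\nu\in[\bnu]$. Finiteness of second moments follows because $G\subset O(D)$ preserves norms. Finally,
\[
\int\|x-y\|^2\,\d\gamma=\int d_Q([x],\by)^2\,\d\Lambda=\int d_Q^2\,\d\Gamma .
\]
This shows the infimum is attained and equals the left-hand side, which proves the displayed identity.

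\emph{Optimality.} Let $\Gamma$ be optimal and $(\nu_\Gamma,\gamma_\Gamma)$ a minimizer. Take any $\gamma'\in\mathrm{c}_{\R^D}(\mu,\nu_\Gamma)$. Its projection $(\mathfrak q,\mathfrak q)_\sharp\gamma'$ lies in $\mathrm{c}_Q(\bmu,\bnu)$, so by the pointwise inequality above
\[
\int\|x-y\|^2\,\d\gamma'\ \ge\ \W_{2,Q}^2(\bmu,\bnu)=\int\|x-y\|^2\,\d\gamma_\Gamma .
\]
Hence $\gamma_\Gamma$ is optimal and $\W_{2,\R^D}^2(\mu,\nu_\Gamma)=\W_{2,Q}^2(\bmu,\bnu)$. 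Combined with the pointwise inequality, this also yields the general bound $\W_{2,Q}\le\W_{2,\R^D}$ on lifts.
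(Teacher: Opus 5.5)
Your proof is correct and follows the paper's argument essentially step for step: the same $1$-Lipschitz lower bound, the same glued measure $\Lambda$ (the paper's $\eta$) built from the disintegrations of $\mu$ and $\Gamma$, a Borel nearest-representative selector pushed forward with the identity, and the same projection argument giving $\W_{2,Q}^2(\bmu,\bnu)\le\W_{2,\R^D}^2(\mu,\nu_\Gamma)$ for optimality. One small caution: when $s$ is only Borel, the argmin correspondence in $G$ need not have closed graph, so you should rely on your measurable-maximum-theorem fallback there, or select directly in $\R^D$ as the paper does, from the closed set $\{(x,\by,y):\mathfrak q(y)=\by,\ \|x-y\|=d_Q(\mathfrak q(x),\by)\}$, whose sections are compact.
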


\begin{proof}
Let \(\nu\in[\bnu]\) and
\(\gamma\in\mathrm{c}_{\R^D}(\mu,\nu)\) satisfy
\((\mathfrak q,\mathfrak q)_\sharp\gamma=\Gamma\). Since
\[
    d_Q(\mathfrak q(x),\mathfrak q(y))
    \le \|x-y\|,
\]
we have
\[
    \int_{Q\times Q}d_Q(\bx,\by)^2\,\d\Gamma(\bx,\by)
    \le
    \int_{\R^D\times\R^D}\|x-y\|^2\,\d\gamma(x,y).
\]

Conversely, disintegrate
\[
    \mu=\int_Q\mu_{\bx}\,\d\bmu(\bx),
    \qquad
    \Gamma=\int_Q\delta_{\bx}\otimes\Gamma_{\bx}\,\d\bmu(\bx),
\]
where \(\mu_{\bx}\) is supported on \(\mathfrak q^{-1}(\bx)\), and define
\(\eta\in\mP(\R^D\times Q)\) by
\[
    \eta(A\times B)
    \coloneqq
    \int_Q\mu_{\bx}(A)\Gamma_{\bx}(B)\,\d\bmu(\bx).
\]
Then \((\mathfrak q,\mathrm{id})_\sharp\eta=\Gamma\).

Since \(G\) is compact, the corresponding argmin relation is closed
with nonempty compact sections. Hence, a measurable selection theorem
\citep[Theorem~18.18]{kechris2012classical} yields a Borel map
\(Y(x,\by)\in\mathfrak q^{-1}(\by)\) satisfying
\[
    \|x-Y(x,\by)\|=d_Q(\mathfrak q(x),\by).
\]
Set \(\gamma_\Gamma\coloneqq(\mathrm{id},Y)_\sharp\eta\), and let
\(\nu_\Gamma\) be its second marginal. Then
$
    (\mathfrak q,\mathfrak q)_\sharp\gamma_\Gamma=\Gamma$
    and
    $
    \mathfrak q_\sharp\nu_\Gamma=\bnu.
$
Moreover, since \(G\subset O(D)\), we get
\[
    \int_{\R^D}\|y\|^2\,\d\nu_\Gamma(y)
    =
    \int_Q d_Q(\by,[0])^2\,\d\bnu(\by)<\infty,
\]
and hence \(\nu_\Gamma\in[\bnu]\). Finally,
\[
    \int_{\R^D\times\R^D}\|x-y\|^2\,\d\gamma_\Gamma(x,y)
    =
    \int_{Q\times Q}d_Q(\bx,\by)^2\,\d\Gamma(\bx,\by).
\]
Thus the minimum is attained.
If, in addition, \(\Gamma\in\mathrm{c}_Q^{\rm opt}(\bmu,\bnu)\), then
\[
\W_{2,Q}^2(\bmu,\bnu)
\le \W_{2,\R^D}^2(\mu,\nu_\Gamma)
\le \int_{\R^D\times\R^D}\|x-y\|^2\,\d\gamma_\Gamma(x,y)
= \W_{2,Q}^2(\bmu,\bnu).
\]
Hence all inequalities are equalities, which proves the final claim.
\end{proof}

\begin{proof}[Geodesic and energy claims in
Theorem~\ref{thm:quotient-lift-geodesic}]
Let $\Gamma$ be optimal and let $(\nu_\Gamma,\gamma_\Gamma)$ be the pair
constructed in Proposition~\ref{prop:representative-lifting}. The proposition
gives
\begin{equation}
\label{eq:lifted-and-quotient-cost}
\gamma_\Gamma\in\mathrm{c}_{\R^D}^{\rm opt}(\mu,\nu_\Gamma),
\qquad
\W_{2,\R^D}^2(\mu,\nu_\Gamma)
=\W_{2,Q}^2(\bmu,\bnu).
\end{equation}
For $0\leq s<t\leq1$, the coupling
$(\pr^s,\pr^t)_\sharp\gamma_\Gamma$ and the fact that $\mathfrak q$ is
$1$-Lipschitz give
\[
\W_{2,Q}(\bmu_s,\bmu_t)
\leq
\W_{2,\R^D}(\mu_s,\mu_t)
\leq
(t-s)\W_{2,Q}(\bmu,\bnu).
\]
Thus $(\bmu_t)_{t\in I}$ belongs to $AC_I^2(\mP_2(Q))$. Applying this
bound on $[0,s]$, $[s,t]$, and $[t,1]$, and combining it with the triangle
inequality between $\bmu_0=\bmu$ and $\bmu_1=\bnu$, forces equality in each
bound. Hence
\[
\W_{2,Q}(\bmu_s,\bmu_t)
=(t-s)\W_{2,Q}(\bmu,\bnu),
\]
so $(\bmu_t)_{t\in I}$ is a constant-speed geodesic.

Finally, the Euclidean optimal-coupling identity in
Section~\ref{subsec:euclidean-flow-matching} and
\eqref{eq:lifted-and-quotient-cost} yield
\[
\int_I\int_{\R^D}\|v_t^{\gamma_\Gamma}(z)\|^2
\,\d\mu_t(z)\,\d t
=\W_{2,Q}^2(\bmu,\bnu).
\]
\end{proof}

\vspace{-3.5mm}\section{Proof of equivariant flow matching via symmetrization}
\label{app:equivariant-theory}
This section proves Theorem~\ref{thm:equivariant-symmetrization}. Write
$\lambda_G$ for the normalized Haar probability measure on $G$. We call
$\pi\in\mathrm{c}_{\R^D}(\mu,\nu)$ diagonally $G$-invariant if
$(g,g)_\sharp\pi=\pi$ for every $g\in G$.

\begin{proposition}[Equivariance of the flow-matching minimizer]
Let $\pi\in\mathrm{c}_{\R^D}(\mu,\nu)$ be diagonally $G$-invariant and let
$v^\pi$ denote the minimizer of $J_\pi$ in \eqref{minvr}. Then $v^\pi$
admits a $G$-equivariant representative. More precisely, there exists a
jointly Borel field $\bar v:I\times\R^D\to\R^D$ such that
\[
\bar v_t=v_t^\pi
\qquad
\mu_t\text{-a.e. for a.e. }t\in I,
\]
and
\[
\bar v_t(gz)=g\bar v_t(z)
\qquad
\text{for every }g\in G,\ z\in\R^D,\ t\in I .
\]
\end{proposition}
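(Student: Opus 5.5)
The plan is to symmetrize an arbitrary Borel version of \(v^\pi\) with the Haar measure and then check two things: the result is still a version of the minimizer, and it is exactly equivariant everywhere. Fix any jointly Borel version \(v\) of \(v^\pi\), which exists by the conditional-expectation formula \eqref{eq:euclidean-fm-conditional-velocity}. Define
\[
\bar v_t(z)\coloneqq\int_G h^{-1}v_t(hz)\,\d\lambda_G(h).
\]
This is only meaningful once we know \(\int_G\|v_t(hz)\|\,\d\lambda_G(h)<\infty\), so as a first step we truncate. On the Borel set where that integral is infinite we set \(\bar v_t(z)\coloneqq0\). This set is \(G\)-invariant by invariance of the Haar measure, so the truncation does not spoil equivariance. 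Joint measurability of \(\bar v\) in \((t,z)\) follows from Fubini--Tonelli, since \((t,z,h)\mapsto h^{-1}v_t(hz)\) is jointly Borel: the action map \(G\times\R^D\to\R^D\) is continuous.

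Exact equivariance is then a change of variables. For \(g\in G\),
\[
\bar v_t(gz)=\int_G h^{-1}v_t(hgz)\,\d\lambda_G(h)=\int_G g\,k^{-1}v_t(kz)\,\d\lambda_G(k)=g\bar v_t(z),
\]
where we substitute \(k=hg\), so that \(h^{-1}=gk^{-1}\), and use right invariance of \(\lambda_G\); a compact group is unimodular, so this is available. This holds for every \(z\) and \(t\), including on the truncation set.

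The main step is to show \(\bar v_t=v^\pi_t\) \(\mu_t\)-a.e. The key input is that \(v\circ g\) is also a version of the minimizer after conjugation. Diagonal invariance gives \(\pr^t\circ(g,g)=g\circ\pr^t\), so \(\mu_t\) is \(G\)-invariant. Moreover, for \((X,X')\sim\pi\), the pair \((gX,gX')\) has the same law. Hence \(\mathbb E[X'-X\mid Z_t]\) satisfies
\[
\mathbb E[g(X'-X)\mid gZ_t=gz]=\mathbb E[X'-X\mid Z_t=gz],
\]
which yields \(v_t(gz)=g\,v_t(z)\) for \(\mu_t\)-a.e. \(z\) and every fixed \(g\). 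Alternatively one can argue via the objective: the change of variables \((x,x')\mapsto(gx,gx')\) shows \(J_\pi(g^{-1}v(g\,\cdot))=J_\pi(v)\), and uniqueness of the minimizer \(\d t\,\d\mu_t\)-a.e. follows from strict convexity of the quadratic loss.

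Next apply Tonelli on \(I\times\R^D\times G\) to the set of \((t,z,h)\) where \(h^{-1}v_t(hz)\neq v_t(z)\). Each \(h\)-section is \(\d t\,\d\mu_t\)-null, so the set is null for \(\d t\,\d\mu_t\,\d\lambda_G\). Hence for a.e. \((t,z)\) the identity \(h^{-1}v_t(hz)=v_t(z)\) holds for \(\lambda_G\)-a.e. \(h\). At such points the integrand is constant in \(h\), so the integral is finite, we are off the truncation set, and \(\bar v_t(z)=v_t(z)\).

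The only subtle point is getting the \(\mu_t\)-a.e. statement for every \(g\), rather than just a Haar-a.e. statement, and keeping all sets jointly measurable. Using a Borel version of the conditional expectation together with the Tonelli argument over \(G\) handles this.
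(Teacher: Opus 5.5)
Your proof is correct and follows essentially the same route as the paper: you Haar-average a jointly Borel version of $v^\pi$ (set to zero on the $G$-invariant set where the integral diverges), get exact equivariance from right invariance of $\lambda_G$, and identify $\bar v$ with $v^\pi$ by combining the $\mathrm{d}t\,\mathrm{d}\mu_t$-a.e.\ identity $g^{-1}v_t(g\,\cdot)=v_t$ for each fixed $g$ (from invariance of $J_\pi$ and uniqueness of its minimizer) with Tonelli over $I\times\R^D\times G$. The only minor difference is that you get a.e.\ finiteness of the Haar integral directly from the a.e.\ constancy of the integrand in $h$, whereas the paper first derives it from an $L^2$ bound and Cauchy--Schwarz; both are valid.
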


\begin{proof}
Let $(X,Y)\sim\pi$, $Z_t\coloneqq\pr^t(X,Y)$ and $U\coloneqq Y-X$.
Diagonal invariance and linearity of the action give
\[
(Z_t,U)\overset{\mathrm d}=(gZ_t,gU)
\qquad\text{for every }g\in G.
\]
In particular, each $\mu_t$ is $G$-invariant. For $g\in G$, define
\[
v_t^{\pi,g}(z)\coloneqq g^{-1}v_t^\pi(gz).
\]
Diagonal invariance and orthogonality give
$J_\pi(v^{\pi,g})=J_\pi(v^\pi)$. By uniqueness of the minimizer in
$L^2(\d t\,\d\mu_t)$,
\[
g^{-1}v_t^\pi(g\,\cdot)=v_t^\pi
\qquad
\d t\,\d\mu_t\text{-a.e.}
\]
for every $g\in G$.  Choose a jointly Borel representative of $v^\pi$ and write
\[
F(t,z,h)\coloneqq h^{-1}v_t^\pi(hz).
\]
The action is continuous, so $F$ is jointly Borel. Orthogonality of $h$ and
$G$-invariance of every $\mu_t$ give
\[
\begin{aligned}
&\int_I\int_{\R^D}\int_G
\|F(t,z,h)\|^2
\,\d\lambda_G(h)\,\d\mu_t(z)\,\d t\\
&\qquad=
\int_G\int_I\int_{\R^D}
\|v_t^\pi(hz)\|^2
\,\d\mu_t(z)\,\d t\,\d\lambda_G(h)
=\|v^\pi\|_{L^2(\d t\,\d\mu_t)}^2<\infty.
\end{aligned}
\]
By Tonelli's theorem, the inner $L^2(G)$ integral is finite for
$\d t\,\d\mu_t$-almost every $(t,z)$. Since $\lambda_G$ is a probability
measure, Cauchy--Schwarz then gives
\[
\int_G\|F(t,z,h)\|\,\d\lambda_G(h)<\infty
\]
for almost every $(t,z)$. Hence the vector-valued Haar integral exists outside
a $\d t\,\d\mu_t$-null set. Define
\[
\bar v_t(z)\coloneqq
\int_G F(t,z,h)\,\d\lambda_G(h)
\]
on this set and set it to zero otherwise. Componentwise integration gives a
jointly Borel field $\bar v$.

It remains to show that this field represents the same $L^2$ minimizer. Since
the preceding equality holds $\d t\,\d\mu_t$-almost everywhere for every
fixed $h\in G$, another application of Tonelli's theorem gives
\[
\int_I\int_{\R^D}\int_G
\bigl\|F(t,z,h)-v_t^\pi(z)\bigr\|^2
\,\d\lambda_G(h)\,\d\mu_t(z)\,\d t=0.
\]
Consequently, for $\d t\,\d\mu_t$-almost every $(t,z)$, the integrand
vanishes for $\lambda_G$-almost every $h$, and hence
$\bar v_t(z)=v_t^\pi(z)$. Thus the averaging changes only the representative
of the $L^2(\d t\,\d\mu_t)$ minimizer.

It remains to check that the averaged representative is exactly equivariant.
For $k\in G$, right invariance of Haar measure and the change of variables
$r=hk$ give
\[
\begin{aligned}
\bar v_t(kz)
&=\int_G h^{-1}v_t^\pi(hkz)\,\d\lambda_G(h)\\
&=\int_G k r^{-1}v_t^\pi(rz)\,\d\lambda_G(r)
=k\bar v_t(z).
\end{aligned}
\]
The set where the Haar integral is not finite is itself $G$-invariant by the
same change of variables. Therefore the equality also holds there under the
zero convention, and $\bar v_t(kz)=k\bar v_t(z)$ for every
$k\in G$, $z\in\R^D$, and $t\in I$.

\end{proof}
For \(\eta\in\mP_2(\R^D)\) and
\(\pi\in\mathrm{c}_{\R^D}(\mu,\nu)\), define
\[
\eta^G\coloneqq\int_G g_\sharp\eta\,\d\lambda_G(g),
\qquad
\pi^G\coloneqq\int_G(g,g)_\sharp\pi\,\d\lambda_G(g),
\]
Then
\[
\pi^G\in\mathrm{c}_{\R^D}(\mu^G,\nu^G),
\qquad
(\mathfrak q,\mathfrak q)_\sharp\pi^G
=
(\mathfrak q,\mathfrak q)_\sharp\pi.
\]
Haar invariance gives $(h,h)_\sharp\pi^G=\pi^G$ for every $h\in G$.
Thus diagonal symmetrization preserves the coupling on the quotient, and
\(\pi^G\in\mathrm{c}_{\R^D}(\mu,\nu)\) whenever both marginals are
\(G\)-invariant.

\begin{proposition}[Minimization over equivariant fields]
Let $\pi\in\mathrm{c}_{\R^D}(\mu,\nu)$ be arbitrary. Then the
field $v^{\pi^G}$ associated with $\pi^G$ minimizes $J_\pi$ over all
$G$-equivariant fields:
\[
v^{\pi^G}\in
\argmin_{\substack{v:I\times\R^D\to\R^D\ \mathrm{jointly\ Borel}\\
v_t(gx)=gv_t(x)\ \forall\,t,g,x}}
J_\pi(v).
\]
In particular, the constrained problem is the ordinary
flow-matching problem for the diagonally symmetrized coupling $\pi^G$.
\end{proposition}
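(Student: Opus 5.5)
The plan is to prove two things in turn: that \(J_\pi(v)=J_{\pi^G}(v)\) for every \(G\)-equivariant field \(v\), and that \(v^{\pi^G}\) has a \(G\)-equivariant representative that minimizes \(J_{\pi^G}\) over all fields. Once both are in hand, for any equivariant \(v\) we get
\[
J_\pi(v)=J_{\pi^G}(v)\ge J_{\pi^G}(v^{\pi^G})=J_\pi(v^{\pi^G}).
\]
The last equality applies the first step to the equivariant representative of \(v^{\pi^G}\). This is exactly the claimed \(\argmin\) statement, and it also shows that the constrained problem coincides with ordinary flow matching for \(\pi^G\).

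\textbf{Step 1: invariance of the loss.} Fix an equivariant jointly Borel \(v\). By the definition of \(\pi^G\) as a Haar mixture and Tonelli's theorem (the integrand is nonnegative),
\[
J_{\pi^G}(v)=\int_G\int_I\int\bigl\|v_t(\pr^t(gx,gx'))-(gx'-gx)\bigr\|^2\,\d\pi(x,x')\,\d t\,\d\lambda_G(g).
\]
Linearity gives \(\pr^t(gx,gx')=g\,\pr^t(x,x')\). Equivariance then gives \(v_t(g\,\pr^t(x,x'))=g\,v_t(\pr^t(x,x'))\). Since \(g\) is orthogonal, the integrand equals \(\|v_t(\pr^t(x,x'))-(x'-x)\|^2\), which no longer depends on \(g\). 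Integrating out the probability measure \(\lambda_G\) yields \(J_\pi(v)\). This holds even when both sides are \(+\infty\).

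\textbf{Step 2: the minimizer is equivariant.} The coupling \(\pi^G\) is diagonally \(G\)-invariant, as noted above via Haar invariance. Its second moments are finite because the action is orthogonal. By the Euclidean theory of Section~\ref{subsec:euclidean-flow-matching}, \(v^{\pi^G}\) minimizes \(J_{\pi^G}\) over all jointly Borel fields. The preceding proposition on equivariance of the flow-matching minimizer then gives a jointly Borel representative \(\bar v\) that is exactly \(G\)-equivariant. It agrees with \(v^{\pi^G}\) \(\d t\,\d\mu^G_t\)-a.e., where \(\mu^G_t=\pr^t_\sharp\pi^G\).

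Since \(J_{\pi^G}\) depends on the field only through its values \(\d t\,\d\mu^G_t\)-a.e., \(\bar v\) is also a minimizer of \(J_{\pi^G}\) over all fields. We therefore take \(v^{\pi^G}\) to mean this representative.

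\textbf{Main obstacle: choice of representative.} The statement is about a specific field rather than an \(L^2\) class, so the representative matters. For \(J_\pi\), a modification on a \(\mu^G_t\)-null set is harmless. The reason is that \(\mu_t\coloneqq\pr^t_\sharp\pi\) satisfies \(\mu_t\le|G|\,\mu^G_t\) for finite \(G\). For general compact \(G\) the same domination fails, but it is not needed: Step 1 applied to the equivariant \(\bar v\) already gives \(J_\pi(\bar v)=J_{\pi^G}(\bar v)\) directly, so no null-set comparison is required.

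One remaining check is finiteness. We need \(J_{\pi^G}(v^{\pi^G})<\infty\), which follows from the bound \(\int\|x'-x\|^2\,\d\pi^G<\infty\). With that, the chain of equalities and inequalities is meaningful and the proof is complete.
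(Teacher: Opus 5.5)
Your proposal is correct and follows the paper's proof essentially step for step. You show $J_{(g,g)_\sharp\pi}(v)=J_\pi(v)$ for equivariant $v$ using linearity and orthogonality, average over Haar measure to get $J_{\pi^G}(v)=J_\pi(v)$, and invoke the preceding proposition to pass to an equivariant representative of the unrestricted minimizer $v^{\pi^G}$. Your additional remarks make explicit a point the paper's terse proof leaves implicit: Tonelli for the Haar average, and the need to use that equivariant representative specifically, since $\pr^t_\sharp\pi$ need not be absolutely continuous with respect to $\pr^t_\sharp\pi^G$ when $G$ is not finite.
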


\begin{proof}
For every $G$-equivariant field $v$ and every $g\in G$, orthogonality
of the action gives
\[
J_{(g,g)_\sharp\pi}(v)
=
\int_I\int_{\R^D\times\R^D}
\bigl\|v_t(g\pr^t(x,y))-g(y-x)\bigr\|^2
\,\d\pi(x,y)\,\d t
=
J_\pi(v).
\]
Averaging over $G$ therefore yields $J_{\pi^G}(v)=J_\pi(v)$ for every
equivariant $v$. Since $\pi^G$ is diagonally $G$-invariant, its
unrestricted minimizer $v^{\pi^G}$ admits a $G$-equivariant
representative. As $v^{\pi^G}$ minimizes $J_{\pi^G}$ over all fields,
it also minimizes $J_\pi$ over the equivariant ones.
\end{proof}
\begin{proposition}[Equivariant flows on the quotient]
\label{prop:equivariant-flow-symmetrization}
Let \(v:I\times\R^D\to\R^D\) be \(G\)-equivariant and assume that
\eqref{flow-ode} admits a unique flow \(\gamma_t\). Then \(\gamma_t\) induces
the well-defined quotient flow
\[
\bar\gamma_t([x])\coloneqq[\gamma_t(x)].
\]
For every \(\eta\in\mP_2(\R^D)\) and \(t\in I\),
\[
\mathfrak q_\sharp\gamma_{t,\sharp}\eta
=
\bar\gamma_{t,\sharp}\mathfrak q_\sharp\eta.
\]
In particular, if
\(\mathfrak q_\sharp\eta=\mathfrak q_\sharp\widetilde\eta\), then
\[
\mathfrak q_\sharp\gamma_{t,\sharp}\eta
=
\mathfrak q_\sharp\gamma_{t,\sharp}\widetilde\eta,
\]
so the projected curve is independent of the representative lift at time
\(0\).
\end{proposition}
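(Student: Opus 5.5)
The plan is to show that the flow of an equivariant field commutes with the group action, \(\gamma_t(gx)=g\gamma_t(x)\) for all \(g\in G\), \(x\in\R^D\), \(t\in I\). This makes \(\bar\gamma_t\) well defined. The pushforward identities then follow from \(\mathfrak q\circ\gamma_t=\bar\gamma_t\circ\mathfrak q\).

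For the commutation, fix \(g\in G\) and \(x\), and set \(\tilde\gamma(t)\coloneqq g\gamma_t(x)\). Since \(g\) is linear and \(v\) is equivariant,
\[
\dot{\tilde\gamma}(t)=g\,v(t,\gamma_t(x))=v(t,g\gamma_t(x))=v(t,\tilde\gamma(t)),
\qquad
\tilde\gamma(0)=gx .
\]
So \(\tilde\gamma\) solves \eqref{flow-ode} with initial value \(gx\). The assumed uniqueness of the flow then forces \(\tilde\gamma(t)=\gamma_t(gx)\). Consequently, if \([x]=[y]\), say \(y=gx\), then \([\gamma_t(y)]=[g\gamma_t(x)]=[\gamma_t(x)]\), so \(\bar\gamma_t\) is well defined and \(\bar\gamma_t\circ\mathfrak q=\mathfrak q\circ\gamma_t\).

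For the pushforward claim, \(\bar\gamma_t\) must be Borel on \(Q\). If \(\gamma_t\) is continuous (the usual setting for a unique flow), \(\bar\gamma_t\) is continuous, since \(\mathfrak q\) is open for a compact group action; equivalently, \(\bar\gamma_t\) is Lipschitz whenever \(\gamma_t\) is, because \(d_Q(\bar\gamma_t[x],\bar\gamma_t[y])\le\min_g\|\gamma_t(x)-\gamma_t(gy)\|\). Without continuity, I would note that the Borel subsets of \(Q\) are exactly the sets \(B\) with \(\mathfrak q^{-1}(B)\) Borel. Since \(\bar\gamma_t^{-1}(B)\) has preimage \(\gamma_t^{-1}(\mathfrak q^{-1}(B))\), measurability follows. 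With this in hand,
\[
\mathfrak q_\sharp\gamma_{t,\sharp}\eta=(\mathfrak q\circ\gamma_t)_\sharp\eta=(\bar\gamma_t\circ\mathfrak q)_\sharp\eta=\bar\gamma_{t,\sharp}\mathfrak q_\sharp\eta .
\]
The final statement is immediate: if \(\mathfrak q_\sharp\eta=\mathfrak q_\sharp\widetilde\eta\), both projected curves equal \(\bar\gamma_{t,\sharp}\) of the same quotient law.

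The main obstacle is only the measurability of \(\bar\gamma_t\); the dynamics reduce entirely to the uniqueness argument. For the measurability step I would rely on the quotient \(\sigma\)-algebra characterization, which holds because \(\mathfrak q\) admits a Borel section (measurable selection as in Proposition~\ref{prop:representative-lifting}).
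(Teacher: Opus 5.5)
Your proposal is correct and follows the paper's proof: you show $\gamma_t(gx)=g\gamma_t(x)$ by checking that $t\mapsto g\gamma_t(x)$ solves the ODE started at $gx$ and invoking uniqueness, then deduce well-definedness and $\mathfrak q\circ\gamma_t=\bar\gamma_t\circ\mathfrak q$, and push forward. Your extra check that $\bar\gamma_t$ is Borel (via a Borel section of $\mathfrak q$, or via continuity when $\gamma_t$ is continuous) fills in a detail the paper's short proof leaves implicit; it tacitly assumes $\gamma_t$ is Borel, which the statement already presupposes by writing $\gamma_{t,\sharp}\eta$.
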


\begin{proof}
Equivariance of \(v\) and uniqueness of the ODE imply
\(\gamma_t(gx)=g\gamma_t(x)\). Thus \(\bar\gamma_t\) is well-defined and
\(\mathfrak q\circ\gamma_t=\bar\gamma_t\circ\mathfrak q\), which gives
\[
\mathfrak q_\sharp\gamma_{t,\sharp}\eta
=
\bar\gamma_{t,\sharp}\mathfrak q_\sharp\eta.
\]
The final claim follows by applying this identity to two measures with the
same quotient push-forward.
\end{proof}

The coordinatewise categorical formulation in
Section~\ref{subsec:endpoint-euclidean-flow-matching} is directly compatible
with coordinate permutation actions.

\begin{corollary}[Coordinatewise equivariant endpoint prediction]
\label{cor:coordinatewise-equivariant-endpoint-prediction}
Assume that \(G\subset\mathfrak S_D\) acts by coordinate permutations, where
\(g\cdot d\) is determined by \((gx)_{g\cdot d}=x_d\), and let
\(
\mathcal A=\{a_1,\ldots,a_M\}~\subset~\R.
\)
Let \(\mu,\nu\in\mP_2(\R^D)\) be arbitrary, assume that \(\nu\) is supported
on \(\mathcal A^D\), and let \(\pi\in\mathrm{c}_{\R^D}(\mu,\nu)\). Then the
conditional coordinate probabilities \(q^{\ast,\pi^G,d}\) associated
with \(\pi^G\) admit a representative satisfying
\[
q_t^{\ast,\pi^G,g\cdot d}(n\mid gz)
=
q_t^{\ast,\pi^G,d}(n\mid z)
\]
for every \(t\in I\), \(g\in G\), \(d\in[D]\), \(n\in[M]\), and
\(z\in\R^D\), and minimize
\( J_\pi^{\rm coord}\) over all such equivariant coordinate
kernels:
\[
\bigl(q^{\ast,\pi^G,d}\bigr)_{d=1}^D
\in
\argmin_{\substack{q^d:I\times\R^D\to\Delta_M\ \mathrm{jointly\ Borel}\\
q_t^{g\cdot d}(n\mid gz)=q_t^d(n\mid z)}}
 J_\pi^{\rm coord}\bigl((q^d)_{d=1}^D\bigr).
\]
Writing \(\mu_t^G\coloneqq\pr^t_\sharp\pi^G\), for a.e.\ \(t<1\) and
\(\mu_t^G\)-a.e.\ \(z\),
\[
\bigl(v_t^{\pi^G}(z)\bigr)_d
=
\frac{1}{1-t}
\left(
\sum_{n=1}^M a_nq_t^{\ast,\pi^G,d}(n\mid z)-z_d
\right).
\]
In particular, this defines a \(G\)-equivariant representative of
\(v^{\pi^G}\), without requiring \(\mu\) or \(\nu\) to be
\(G\)-invariant.
\end{corollary}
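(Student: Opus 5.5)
The plan has four parts: equivariance of the posteriors under the symmetrized coupling, reduction of \(J_\pi^{\rm coord}\) to \(J_{\pi^G}^{\rm coord}\) on equivariant kernels, minimality, and the velocity identity.

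\emph{Equivariance of the posteriors.} Let \((X,X')\sim\pi^G\) and \(Z_t=(1-t)X+tX'\). The coupling \(\pi^G\) is diagonally \(G\)-invariant and the action is linear. Hence \((gZ_t,gX')\) has the same law as \((Z_t,X')\), and in particular \(\mu_t^G\) is \(G\)-invariant. Since \((gX')_{g\cdot d}=X'_d\), this gives, for every bounded Borel \(\varphi\),
\[
\mathbb E\bigl[\varphi(Z_t)\mathbf 1\{X'_{g\cdot d}=a_n\}\bigr]
=\mathbb E\bigl[\varphi(gZ_t)\mathbf 1\{X'_d=a_n\}\bigr].
\]
Therefore \(z\mapsto q_t^{\ast,\pi^G,d}(n\mid g^{-1}z)\) is a version of \(\mathbb P(X'_{g\cdot d}=a_n\mid Z_t=z)\), so the identity holds \(\mu_t^G\)-a.e. for each fixed \(g\). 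To obtain an identity valid for every \(z\), I average over the finite group. Define
\[
\bar q_t^d(n\mid z)\coloneqq \frac1{|G|}\sum_{h\in G} q_t^{\ast,\pi^G,h\cdot d}(n\mid hz).
\]
This is again a version of the posterior, since each summand is. It lies in \(\Delta_M\) because it is a convex combination. It is exactly equivariant by the reindexing \(h\mapsto hg^{-1}\), using \((hg^{-1})\cdot(g\cdot d)=h\cdot d\). This is the same Haar-averaging step as in the proposition on equivariance of the flow-matching minimizer, but here no integrability issues arise because \(G\) is finite and the kernels are bounded. Joint measurability follows from measurability of each summand.

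\emph{Reduction of \(J_\pi^{\rm coord}\) to \(J_{\pi^G}^{\rm coord}\) on equivariant kernels.} Let \((q^d)\) be equivariant and \(g\in G\). Replacing \(\pi\) by \((g,g)_\sharp\pi\) turns the integrand into \(\sum_d-\log q_t^d(\kappa_d(gx')\mid g z)\). Since \(\kappa_d(gx')=\kappa_{g^{-1}\cdot d}(x')\) and the sum over \(d\) may be reindexed by the bijection \(d\mapsto g\cdot d\), equivariance gives back \(\sum_d -\log q_t^d(\kappa_d(x')\mid z)\). The integrand takes values in \([0,\infty]\), so averaging over \(g\) by Tonelli yields \(J_\pi^{\rm coord}(q)=J_{\pi^G}^{\rm coord}(q)\), including the case where both equal \(+\infty\).

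\emph{Minimality.} The coordinatewise posteriors of \(\pi^G\) minimize \(J_{\pi^G}^{\rm coord}\) over all jointly Borel kernels, as stated in Section~\ref{subsec:endpoint-euclidean-flow-matching}; this is pointwise Gibbs' inequality. Changing the version on a \(\d t\,\d\mu_t^G\)-null set does not change the objective, so \(\bar q\) is also a minimizer. The objective is finite at \(\bar q\): the integrand equals the conditional entropy, which is at most \(D\log M\). Combined with the reduction above, \(\bar q\) minimizes \(J_\pi^{\rm coord}\) over the equivariant class.

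\emph{Velocity identity.} Applying \eqref{eq:euclidean-fm-coordinate-endpoint-mixture} to the coupling \(\pi^G\) gives the stated formula \(\mu_t^G\)-a.e. Its right-hand side, built from \(\bar q\), satisfies \(\bigl(v(gz)\bigr)_{g\cdot d}=\bigl(v(z)\bigr)_d\), which is coordinate-permutation equivariance \(v(gz)=gv(z)\). Nowhere was \(G\)-invariance of \(\mu\) or \(\nu\) used, only that of \(\pi^G\).

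I expect the main obstacle to be the index bookkeeping. One must check that \(g\cdot d\) defines a left action, \((gh)\cdot d=g\cdot(h\cdot d)\), and that \(\kappa_d(gx')=\kappa_{g^{-1}\cdot d}(x')\) is consistent with the definition \((gx)_{g\cdot d}=x_d\). This requires a single alphabet \(\mathcal A\) for all coordinates, as the statement assumes. I will verify these two identities first, since both the reindexing in the posterior averaging and the reindexing in the reduction depend on them.
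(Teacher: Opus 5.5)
Your proposal is correct and follows essentially the same route as the paper's proof. Both arguments average an arbitrary version of the \(\pi^G\)-posteriors over the finite group to get an exactly equivariant version, show \(J_\pi^{\rm coord}=J_{\pi^G}^{\rm coord}\) on equivariant kernels by reindexing coordinates, invoke unrestricted minimality for \(\pi^G\), and verify equivariance of the induced velocity. Your write-up additionally makes explicit several points the paper leaves implicit: why each averaged summand is a version, the index identities \(\kappa_d(gx')=\kappa_{g^{-1}\cdot d}(x')\) and \((gh)\cdot d=g\cdot(h\cdot d)\), and the treatment of \(+\infty\) values and finiteness of the objective.
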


\begin{proof}
Since $G\subset\mathfrak S_D$ is finite, diagonal invariance of $\pi^G$
allows us to choose the conditional coordinate probabilities equivariantly.
If $(X,Y)\sim\pi^G$ and $Z_t=\pr^t(X,Y)$, then
$(Z_t,Y)\overset{\mathrm d}=(gZ_t,gY)$ for every $g\in G$. Consequently,
starting from any jointly Borel version $q$, each
$q_t^{g\cdot d}(n\mid gz)$ is a version of the same conditional probability.
Define
\[
\bar q_t^d(n\mid z)
\coloneqq
\frac{1}{|G|}\sum_{g\in G}q_t^{g\cdot d}(n\mid gz).
\]
This is another version of the same conditional probabilities and satisfies
the displayed equivariance identity for every $g,d,n,t,z$. We use this
version below.

For every equivariant coordinate kernel \(q\), relabelling the coordinates
and summing over \(d\in[D]\) gives
\[
 J_{(g,g)_\sharp\pi}^{\rm coord}(q)
=
 J_\pi^{\rm coord}(q).
\]
Hence
\( J_{\pi^G}^{\rm coord}(q)
= J_\pi^{\rm coord}(q)\) on the equivariant class. By
\eqref{eq:euclidean-fm-coordinate-cross-entropy}, these probabilities
minimize the unrestricted coordinatewise objective for \(\pi^G\), and
therefore the constrained objective for \(\pi\).

For any equivariant coordinate kernel \(q\), define
\[
v_t^q(z)_d
\coloneqq
\frac{1}{1-t}
\left(
\sum_{n=1}^M a_nq_t^d(n\mid z)-z_d
\right).
\]
Then
\[
v_t^q(gz)_{g\cdot d}
=
\frac{\sum_{n=1}^M
a_nq_t^{g\cdot d}(n\mid gz)-(gz)_{g\cdot d}}{1-t}
=
v_t^q(z)_d.
\]
Thus \(v_t^q(gz)=gv_t^q(z)\), independently of the invariance of the
endpoint measures. Taking
\(q^d=q^{\ast,\pi^G,d}\) for every \(d\in[D]\) and using
\eqref{eq:euclidean-fm-coordinate-endpoint-mixture} proves the remaining
claims.
\end{proof}
\begin{remark}[Categorical feature blocks]
The scalar formulation $\mathcal A\subset\R$ above is not restrictive for
finite categorical data: any finite set of classes can be represented by
distinct points on the real line. Such an encoding, however, introduces an
arbitrary geometry between classes. In practice, we therefore use one-hot
encodings and group the corresponding coordinates into categorical feature
blocks. A softmax head over the possible block values estimates their
conditional endpoint distribution, whose mean gives the conditional mean
entering the velocity. Thus the same construction applies to one-hot node and
edge features without requiring conditional independence between feature
blocks.
\end{remark}
\vspace{-3.5mm}\section{Related Work on Transport-Based Graph Generation}
\label{supp_sec:related_work}
Graph generators commonly enforce node-relabeling symmetry through equivariant
architectures \citep{jo2022gdss,vignac2023digress,eijkelboom2024variational}.
Recent graph flow models additionally exploit transport geometry. GGFlow uses
minibatch OT for graph pairing \citep{hou2026ggflow}, while BWFlow constructs
paths from Bures--Wasserstein transport between graph representations
\citep{jiang2025bureswasserstein_flow_matching}. Concurrent Flowette introduces
a graph-structured graphette prior and uses fused Gromov--Wasserstein distances
for structure-aware minibatch pairing \citep{wijesinghe2026flowette}.
Both GGFlow and Flowette therefore use transport primarily to choose which
source and target graphs to pair. In contrast, we use the GW plan itself to
select a hard node relabeling before interpolation, explicitly separating this
inner Gromov--Monge alignment from the optional outer graph assignment. Our
ablations show that much of the improvement already comes from the inner
alignment, while outer graph matching provides a smaller or dataset-dependent
additional benefit. This connects our construction to GW-based graph matching
and alignment \citep{peyre2016gromov,vayer2020fused,xu2019gromov}, while
deriving both operations from transport on the permutation quotient.

\vspace{-3.5mm}\section{Conditional continuous-SBM experiment}
\label{app:conditional-sbm}
We additionally condition the continuous SBM model on the community count
\(K\), form minibatches containing only one value of \(K\), and compare pooled generated and real
graphs with matching \(K\)-multisets. All other settings and metrics agree with
Section~\ref{subsec:sbm}. The conditional results in
Table~\ref{tab:cond-sbm} reproduce the unconditional ranking: GW alignment has
the largest advantage at five Euler steps, and the gap narrows with the
integration budget.

\begin{table}[t]
\centering
\footnotesize
\setlength{\tabcolsep}{3pt}
\caption{Continuous SBM, \emph{class-conditional} on
$K\in\{1,\dots,5\}$ ($N=10$), evaluated by pooling generated and real graphs
with the same ground-truth $K$ multiset. Mean $\pm$ std over $3$ training seeds
($5$ evaluation repeats each) at $5/25/125$ Euler steps. Descriptor MMDs:
lower is better; FGW--NNA: closest to $0.5$ is better. Best per column in bold.}
\label{tab:cond-sbm}
\resizebox{\linewidth}{!}{%
\begin{tabular}{l ccc ccc ccc ccc}
\toprule
& \multicolumn{3}{c}{FGW--NNA ($\to 0.5$)}
& \multicolumn{3}{c}{Degree MMD ($\downarrow$)}
& \multicolumn{3}{c}{Clustering MMD ($\downarrow$)}
& \multicolumn{3}{c}{Graphlet-orbit MMD ($\downarrow$)}\\
\cmidrule(lr){2-4}\cmidrule(lr){5-7}\cmidrule(lr){8-10}\cmidrule(lr){11-13}
Method & $5$ & $25$ & $125$ & $5$ & $25$ & $125$ & $5$ & $25$ & $125$ & $5$ & $25$ & $125$\\
\midrule
\textsc{Random}
 & $0.746_{.009}$ & $0.580_{.008}$ & $0.544_{.021}$
 & $0.043_{.009}$ & $0.008_{.002}$ & $0.005_{.001}$
 & $0.076_{.010}$ & $0.029_{.007}$ & $0.020_{.004}$
 & $0.028_{.002}$ & $0.006_{.004}$ & $0.003_{.003}$\\
\textsc{FLB}
 & $0.714_{.010}$ & $0.543_{.017}$ & $0.558_{.023}$
 & $0.023_{.004}$ & $0.005_{.001}$ & $0.005_{.001}$
 & $0.047_{.006}$ & $0.021_{.003}$ & $0.017_{.003}$
 & $0.018_{.003}$ & $0.005_{.001}$ & $0.003_{.002}$\\
\textsc{FLB}$+$\textsc{FLB}$_{\mathrm{out}}$
 & $0.699_{.007}$ & $0.552_{.018}$ & $0.539_{.007}$
 & $0.013_{.002}$ & $0.007_{.002}$ & $0.007_{.001}$
 & $0.049_{.009}$ & $0.030_{.005}$ & $0.027_{.010}$
 & $\mathbf{0.009}_{.006}$ & $0.005_{.003}$ & $0.005_{.004}$\\
\textsc{GW}
 & $0.584_{.002}$ & $\mathbf{0.523}_{.010}$ & $\mathbf{0.524}_{.003}$
 & $0.011_{.001}$ & $\mathbf{0.005}_{.001}$ & $\mathbf{0.004}_{.001}$
 & $\mathbf{0.016}_{.002}$ & $\mathbf{0.011}_{.001}$ & $\mathbf{0.010}_{.001}$
 & $0.016_{.001}$ & $0.005_{.002}$ & $0.003_{.001}$\\
\textsc{GW}$+$\textsc{GW}$_{\mathrm{out}}$
 & $\mathbf{0.561}_{.017}$ & $0.526_{.028}$ & $0.529_{.014}$
 & $\mathbf{0.008}_{.001}$ & $0.005_{.001}$ & $0.005_{.002}$
 & $0.018_{.001}$ & $0.014_{.001}$ & $0.014_{.004}$
 & $0.012_{.001}$ & $\mathbf{0.004}_{.001}$ & $\mathbf{0.003}_{.001}$\\
\bottomrule
\end{tabular}
}
\end{table}

\vspace{-3.5mm}\section{Full-budget molecular training protocol}
\label{app:full-budget-protocol}
  \vspace{-3mm}\paragraph{Protocol}
  For the full-budget evaluation, we retain the categorical endpoint objective
  and use GW inner alignment with independent outer pairing. We increase the
  model capacities and training budgets toward those used by DeFoG
  \citep{yimingdefog_discrete_fm}. In particular, we augment the transformer inputs with
  structural graph statistics and relative random-walk probabilities (RRWP), and
  use self-conditioning.  For our self-conditioning implementation, on \(50\%\) of optimization steps, the model is
  conditioned on a detached preliminary endpoint prediction
  \citep{chen2023analogbits}. During sampling, each Euler step receives the
  preceding step's endpoint prediction. As regularization during training, Gaussian noise with standard deviation
  \(0.5\,t(1-t)\) is added to the interpolated network input while leaving the
  endpoint target unchanged.

  The categorical objective consists of a bond cross-entropy term and a
  node-feature term, the latter averaging the atom-type and formal-charge
  cross-entropies. These terms are weighted equally for QM9. For ZINC250k, we
  multiply the bond term by \(5\) relative to the node-feature term, following
  the stronger emphasis on edge prediction used by DeFoG and CatFlow.

  Both QM9 and ZINC250k are trained in two stages. We first train  without dropout,
  then restore the model and EMA weights and fine-tune with
  dropout \(0.1\) using
  a new AdamW optimizer. All reported full-budget numbers in this  section use the
  final EMA checkpoint after dropout fine-tuning; the dropout-
  free checkpoints are
  only used to initialize the second stage.
  
   The QM9 model has node, edge, and global widths \(256/64/64\),
feed-forward
  widths \(256/128/128\), depth \(9\), \(8\) attention heads, and  \(12\) RRWP
  channels, for approximately \(6.0\)M parameters. It is first
  trained for
  \(1000\) epochs with batch size \(1024\) without dropout, and
  then fine-tuned
  for \(50\) additional epochs with dropout \(0.1\). The ZINC250k  model has widths
  \(256/64/128\), feed-forward widths \(256/128/256\), depth
  \(12\), \(8\) heads,
  and \(20\) RRWP channels, for approximately \(10.8\)M
  parameters. It is first
  trained for \(300\) epochs using microbatches of \(128\) with
  two-step gradient
  accumulation, giving an effective batch size of \(256\), and
  then fine-tuned for
  \(30\) additional epochs with dropout \(0.1\).
  The initial training runs use AdamW with learning rate
  \(2\times10^{-4}\),
  weight decay \(10^{-4}\), gradient-norm clipping at \(1.0\),
  cosine
  learning-rate decay, and EMA decay \(0.999\). 
  
  The dropout fine-tuning stages
  restore both model and EMA weights, use a new AdamW optimizer
  at learning rate
  \(5\times10^{-5}\), and do not use cosine decay. We report only  the final
  fine-tuned EMA checkpoints, evaluated using \(500\) Euler
  steps.

\vspace{-3mm}\paragraph{ZINC ablation.}
  \begin{table}[b]
  \centering
  \caption{Cumulative ZINC250k ablation at \(25\) Euler steps and reduced training budget. Higher validity is
  better and lower FCD is better.}
  \label{tab:zinc-ablation}
  \begin{tabular}{lcc}
  \toprule
  Setting & Validity & FCD \\
  \midrule
  Base + RRWP & \(0.8846 \pm 0.0030\) & \(12.3371 \pm 0.1229\) \\
  + latent noise & \(0.9080 \pm 0.0012\) & \(11.2470 \pm 0.0704\) \\
  + self-conditioning & \(0.9307 \pm 0.0010\) & \(8.8572 \pm 0.0763\) \\
  + edge weight \(5\) & \(0.9606 \pm 0.0019\) & \(6.9861 \pm 0.0231\) \\
  \bottomrule
  \end{tabular}
  \end{table}

  To isolate the contribution of the main additions used in the full-budget ZINC
  model, we run the cumulative ablation in Table~\ref{tab:zinc-ablation} on
  ZINC250k using the smaller 100-epoch architecture from the base molecular
  experiments. This model has node, edge, and global widths \(128/64/128\), depth
  \(6\), \(8\) attention heads, and about \(2.8\)M parameters. All rows use GW
  inner alignment, independent outer pairing, formal-charge features, the
  categorical endpoint objective, batch size \(32\), and are trained for \(100\)
  epochs. We evaluate final EMA checkpoints with the standard molecular protocol
  using \(10{,}000\) samples and \(3\) repeats, but only at \(25\) Euler steps. The expanded computational budget for the full run explains the remaining performance gap.

\vspace{-3.5mm}\section{Reproducibility details}
\label{app:repro}

This appendix collects the architecture, data, noise, solver, and metric
specifications needed to reproduce the experiments of
Section~\ref{sec:experiments}. All coupling conditions within an experiment share every model and optimization
setting listed here and differ only in their training coupling, namely the inner
solver $\widehat\sigma_a$ and outer reordering $\widehat\tau$ of
Section~\ref{sec:graph-couplings}, with MINIBATCHOT replacing both by its
permutation-blind minibatch assignment.

\vspace{-3mm}\paragraph{Architecture and optimization}
Every model is the same $G_N^{\mathrm{graph}}$-equivariant graph transformer
with node ($X$), edge ($E$), and global ($y$) streams, following the XEy
architecture of \citet{vignac2023digress}. Its node, edge, and global widths
are $d_X=128$, $d_E=64$, and $d_y=128$, respectively, with feed-forward widths
$256/128/256$, depth $6$, $8$ attention heads, and
$\sim\!2.8$M parameters. Training uses AdamW (weight decay $10^{-4}$),
gradient-norm clipping at $1.0$, a cosine learning-rate schedule with base rate
$2\times10^{-4}$, and an exponential moving average of the weights with decay
$0.999$; the EMA weights are used for all evaluation and checkpointing. The
per-experiment channel count $C$, node count $N$, batch size, epoch budget, and
training head are summarized in Table~\ref{tab:hparams}. The SBM models use an
MSE velocity loss \eqref{eq:minibatch-fm-loss}; the molecular models use the
coordinatewise categorical endpoint head with the cross-entropy objective
\eqref{eq:minibatch-endpoint-loss}, with the velocity recovered through
\eqref{eq:euclidean-fm-coordinate-endpoint-mixture}. All experiments use at most a single NVIDIA GeForce RTX 5090 GPU with 32 GB of VRAM.

\begin{table}[b]
\centering
\footnotesize
\setlength{\tabcolsep}{6pt}
\caption{Per-experiment settings for the controlled coupling comparisons,
shared across all coupling conditions.
Architecture, optimizer, EMA, gradient clipping, and learning-rate schedule are
identical throughout (see text); only the entries below vary. The larger-model
protocol is specified separately in Appendix~\ref{app:full-budget-protocol}.
``Head'' is the
flow-matching parametrization: MSE on the velocity field, or cross-entropy on
the predicted categorical endpoint.}
\label{tab:hparams}
\begin{tabular}{l cccc}
\toprule
& Cycle graphs & Continuous SBM & QM9 & ZINC250k\\
\midrule
Channels $C$        & $3$    & $2$    & $11$    & $16$\\
Nodes $N$           & $12$   & $10$   & $\le 9$ & $\le 38$\\
Batch size          & $16$   & $16$   & $64$    & $32$\\
Epochs              & $200$  & $1000$ & $100$   & $100$\\
Time embedding dim  & $32$   & $32$   & $64$    & $64$\\
Head                & velocity (MSE)
                    & velocity (MSE)
                    & \shortstack{endpoint\\(cross-entropy)}
                    & \shortstack{endpoint\\(cross-entropy)}\\
\bottomrule
\end{tabular}
\end{table}
\vspace{-3mm}\paragraph{Cycle-graph data}
Both the source and the target are cycle graphs on $N=12$ nodes, so this
experiment replaces the noise source of the other datasets by a structured one.
A sample places twelve nodes at equispaced angles, offset by a rotation drawn
uniformly from $[0,2\pi)$, on the circle of radius $1$ centred at $(h,y)$, and
joins two nodes exactly when they are neighbours along that circle. Nodes
are then relabelled by a uniformly random permutation, so the index carries no
information and recovering the circular order is precisely the task of the inner
alignment. The tensor $E\in\R^{12\times12\times3}$ has adjacency in
$\{0,1\}$ as its first channel, while the remaining two channels store the
planar node positions on the diagonal, $E_{ii,2:3}\in\R^2$, matching the
diagonal node-feature convention used throughout.
 Positions are stored divided by $8$, which puts the position
and adjacency channels on comparable scales in both the GW cost and the MSE
loss. The source law takes $y=0$ and the target law $y=8$, with
$h\sim\mathcal U[-6,6]$ drawn independently on each side; the two therefore
agree up to a translation, and in particular have identical edge laws. We draw
$4000$ target graphs, use $\lambda_{\mathrm{edge}}=\lambda_{\mathrm{node}}=0.5$
as in the other experiments, and train for $200$ epochs at batch size $16$ with
the velocity (MSE) head.

\vspace{-3mm}\paragraph{Continuous SBM data}
A graph on $N=10$ nodes with $K$ communities assigns the nodes to fixed
near-balanced blocks (remainder placed in the first blocks, e.g. $K=3\to4/3/3$)
in a randomized order. Parametrizing $\mathrm{Beta}(a,b)$ by its mean $\mu$ and
concentration $s$ via $a=s\mu$, $b=s(1-\mu)$, every unordered pair $i<k$ carries
an edge weight $\mathrm{Beta}(s_e\mu_{\mathrm{in}}, s_e(1-\mu_{\mathrm{in}}))$
when $i,k$ share a block and
$\mathrm{Beta}(s_e\mu_{\mathrm{out}}, s_e(1-\mu_{\mathrm{out}}))$ otherwise, with
$\mu_{\mathrm{in}}=0.75$, $\mu_{\mathrm{out}}=0.25$, and $s_e=8$ (i.e.
$\mathrm{Beta}(6,2)$ and $\mathrm{Beta}(2,6)$). The diagonal node feature of a
node in block $k$ is $\mathrm{Beta}(s_n\mu_k, s_n(1-\mu_k))$ with evenly spaced
means $\mu_k=(2k+1)/(2K)$ and $s_n=8$; for $K=2$ this reproduces the binary SBM
feature model exactly. In the unconditional variant graphs with $K\in\{1,\dots,5\}$
are pooled; in the conditional variant $K$ is the class label. In the unconditional variant, $2000$ graphs are drawn for each
$K\in\{1,\dots,5\}$ and pooled ($10\,000$ in total). In the conditional variant,
the same per-class sets are used, with $K$ supplied to the model as the class
label.

\vspace{-3mm}\paragraph{Molecular data}
Molecules are encoded from their RDKit representation after kekulization, so
bond orders are integer-valued. A molecule on $N$ heavy atoms becomes a tensor
$E\in\R^{N\times N\times C}$ whose edge channel is a one-hot over
$\{\text{none},\text{single},\text{double},\text{triple}\}$ and whose diagonal
node feature concatenates an atom-type one-hot with a formal-charge one-hot over
$\{-1,0,+1\}$. QM9 uses heavy atoms $N\le9$ over $\{$C,N,O,F$\}$, giving $C=11$;
ZINC250k uses $N\le38$ over $\{$C,N,O,F,Br,Cl,I,P,S$\}$, giving $C=16$. For
ZINC250k the $28$-way PyTorch~Geometric atom vocabulary---which encodes charged
species---is remapped to the nine element types listed above and to a signed
formal charge in $\{-1,0,+1\}$; $31.8\%$ of ZINC250k molecules carry a formal charge,
and modelling it explicitly is required for those atoms to pass valence on
decode. QM9 uses the GDSS train/test split, matched on raw QM9 record identifiers
(\texttt{valid\_idx\_qm9.json}); ZINC250k uses the standard PyTorch~Geometric
split with $220{,}011$ training, $24{,}445$ validation, and $5{,}000$ test
molecules. FCD is evaluated against the respective test split.

\vspace{-3mm}\paragraph{Variable node count}
For the molecular datasets, one set of graph-transformer parameters is shared
across all node counts. An input with $N$ nodes produces an output with $N$
nodes. Source noise for each target is drawn at the target's own node count, so
no padding or masking is used. During training, minibatches are formed by a
sampler that groups dataset indices by exact $N$ and yields same-$N$ batches;
this is required because the outer cost matrices of
Section~\ref{sec:graph-couplings} compare same-size tensors. In the conditional SBM variant, each batch
additionally contains only graphs with a single value of $K$, so graphs with
different community counts are never coupled. At generation time $N$ is sampled
from the empirical training-set node-count distribution and the Euler solver is
initialized from noise of shape $(N,N,C)$.

\vspace{-3mm}\paragraph{Source noise}
For the molecular datasets the source is a symmetric Gaussian. For each edge
channel and each \(i<j\), we draw
\(
U_{ij},U_{ji}\overset{\mathrm{iid}}{\sim}
\mathcal N(0,\sigma_{\rm adj}^2)
\)
and set
\(
A_{ij}=A_{ji}=\frac{U_{ij}+U_{ji}}{\sqrt{2}},
\)
so that each off-diagonal edge entry satisfies
\(
A_{ij}\sim\mathcal N(0,\sigma_{\rm adj}^2).
\)
The edge diagonal is zero. Each diagonal node feature is drawn independently
from \(\mathcal N(0,\sigma_{\rm node}^2)\), with
\(\sigma_{\rm adj}=\sigma_{\rm node}=0.5\). For the continuous SBM the source
is independent \(\mathcal U[0,1]\) on both channels. The upper-triangular edge
entries are mirrored across the diagonal, and the diagonal node features are drawn
independently. This matches the \((0,1)\) scale of the dense Beta target. The edge
diagonal is again zero.

\vspace{-3mm}\paragraph{Coupling solvers}
All fused solvers use the combined graph entries from
Section~\ref{sec:graph-couplings}, whose squared norms are
\[
\lVert E_{ik}\rVert^2=\lambda_{\rm edge}\,\lVert e^E_{ik}\rVert^2
        +\tfrac{\lambda_{\rm node}}{C_{\rm v}}\,\lVert f^E_i\rVert^2\,\mathbf 1_{\{i=k\}},
\qquad \lambda_{\rm edge}=\lambda_{\rm node}=\tfrac12,
\]
where node features enter only on the diagonal. The inner Gromov--Wasserstein solver (\textsc{GW}) runs \(10\)
  Frank--Wolfe iterations, each linearized by an exact optimal-transport
  (Hungarian) step, and the final soft plan is projected to the
  Frobenius-nearest scaled permutation matrix using the Hungarian assignment in
  Section~\ref{sec:graph-couplings}. When GW is used for outer batch matching, we
  compute the pairwise batch-cost matrix with \(5\) Frank--Wolfe iterations before
  applying the Hungarian assignment over the batch.
The first lower bound (\textsc{FLB}) ranks
nodes by the root-mean-square eccentricity
$\operatorname{ecc}_E(i)=(N^{-1}\sum_k\lVert E_{ik}\rVert^2)^{1/2}$ defined in
Section~\ref{sec:graph-couplings}. It sorts the nodes in each graph by this
value and pairs nodes at the same position in the two orderings. Computing the
eccentricities and sorting costs $O(N^2C+N\log N)$ per pair. For outer
reordering, each same-$N$ training batch is partitioned into sub-batches of at
most eight graphs. Within each sub-batch, we build the pairwise FLB cost matrix
and apply the Hungarian algorithm to permute the target graphs.

\vspace{-3mm}\paragraph{Evaluation metrics}
The descriptor MMDs follow the GraphRNN/GDSS conventions
\citep{jo2022gdss,you2018graphrnn} with the biased estimator and kernels of the
form $k(x,y)=\exp[-d(x,y)^2/(2\sigma^2)]$. Degree uses earth mover's distance
(EMD) with $\sigma=1.0$ on per-graph degree histograms; clustering uses EMD over
$100$ bins on $[0,1]$ with $\sigma=10$ in raw-bin units (equivalently
$\sigma=0.1$ in the $[0,1]$-normalized EMD units of GraphRNN/GDSS). Graphlet
orbit uses Euclidean distance in a Gaussian radial basis function kernel with
$\sigma=30.0$ on the $15$-dimensional, per-node-averaged four-node
graphlet-orbit counts computed with the ORCA graphlet-counting program
\citep{hocevar2014combinatorial}. FGW--NNA pools the $n$ generated and $n$ real
graphs and classifies each by the label of its nearest neighbor (self excluded)
under the fused Gromov--Wasserstein distance with
$\lambda_{\rm edge}=\lambda_{\rm node}=\tfrac12$; under the null this
accuracy tends to $1/2$. For molecules, validity is the fraction of generated
graphs that decode to a sanitizable RDKit molecule without valence correction
or other postprocessing. For FCD, we retain the largest connected fragment of each sanitizable generated
  molecule, canonicalize it, remove duplicates, and compare the ChemNet embeddings
  of these unique generated SMILES with the corresponding unique test-split SMILES.

\vspace{-3.5mm}\section{Solver cost and bound tightness}
\label{app:solver-cost}

Table~\ref{tab:solver-cost} quantifies the trade-off between the 10-iteration
Frank--Wolfe Gromov--Wasserstein approximation (\textsc{GW}) and the first lower bound
(\textsc{FLB}) on real data. For each dataset we draw $300$ pairs of distinct
real graphs at a single fixed node count $N$ (chosen as the modal size with
enough graphs: $N=10$ for the continuous SBM, $N=9$ for QM9, $N=23$ for
ZINC250k, using the categorical molecular encoding of
Section~\ref{subsec:molecular}), and for every pair we compute both the
\textsc{GW} cost---the fused objective at the final $10$-iteration plan---and
the \textsc{FLB} cost $W_2^2$ between the sorted node-eccentricity
distributions, with $\lambda_{\rm edge}=\lambda_{\rm node}=\tfrac12$.
Timings are single-threaded on CPU, so they reflect the per-pair alignment cost
rather than batched throughput.

\begin{table}[t]
\centering
\footnotesize
\setlength{\tabcolsep}{6pt}
\caption{Cost and tightness of \textsc{FLB} relative to computed \textsc{GW}, over $300$
same-$N$ real graph pairs per dataset. Runtime is milliseconds per pair (CPU,
single thread). Tightness is $\operatorname{mean}(\textsc{FLB})/
\operatorname{mean}(\textsc{GW})$; since \textsc{FLB} is a lower bound this is at
most $100\%$. Correlation is the Pearson (and, in parentheses, Spearman rank)
correlation between the \textsc{GW} and \textsc{FLB} costs across the $300$
pairs.}
\label{tab:solver-cost}
\begin{tabular}{l c cc c c}
\toprule
& & \multicolumn{2}{c}{Runtime (ms/pair)} & & \\
\cmidrule(lr){3-4}
Dataset & $N$ & \textsc{GW} & \textsc{FLB} & Tightness & Correlation\\
\midrule
Continuous SBM & $10$ & $1.27$ & $0.060$ & $3.1\%$ & $0.23\ (0.15)$\\
QM9            & $9$  & $2.42$ & $0.071$ & $0.5\%$ & $0.21\ (0.16)$\\
ZINC250k       & $23$ & $3.61$ & $0.081$ & $0.1\%$ & $0.15\ (0.13)$\\
\bottomrule
\end{tabular}
\end{table}

\textsc{FLB} is cheaper than \textsc{GW} by roughly $21\times$ ($N=10$),
$34\times$ ($N=9$), and $44\times$ ($N=23$), and the gap widens with the node
count, as expected from the $O(N^2C+N\log N)$ versus
$O(N^3\times\text{iters})$ scaling. However, it is numerically loose and only
weakly correlated with \textsc{GW}. Its empirical benefit should therefore be
interpreted as that of a cheap alignment heuristic, rather than evidence that
it accurately approximates or screens for the \textsc{GW} cost. For further
lower-bound comparisons, see \citep{piening2025novel}.

Table~\ref{tab:outer-cost} lifts this comparison to the batch level, at the
batch sizes actually used. The \textsc{GW} rows dominate the gradient step,
since every aligned pair needs its own Frank--Wolfe solve. They are
pair-separable, so a $12$-worker pool cuts the inner QM9 alignment to about
$28$\,ms and the reported shares are upper bounds.

\begin{table}[h]
\centering\footnotesize\setlength{\tabcolsep}{5pt}
\caption{Total alignment cost per training batch under the settings used in
Section~\ref{sec:experiments}, in milliseconds (CPU, single thread), together
with its approximate share of one gradient step. Inner-only rows pay for $B$
node alignments; ``$+\,\mathrm{out}$'' rows additionally pay for the batch
reordering, in sub-batches of eight. \textsc{MinibatchOT} uses no inner
alignment and assigns over the full minibatch. The share of a gradient step is
approximate, obtained from model forward/backward times in the training logs.}
\label{tab:outer-cost}
\begin{tabular}{l cc cc cc}
\toprule
& \multicolumn{2}{c}{Continuous SBM}
& \multicolumn{2}{c}{QM9}
& \multicolumn{2}{c}{ZINC250k}\\
& \multicolumn{2}{c}{\scriptsize $N=10$, $B=16$}
& \multicolumn{2}{c}{\scriptsize $N=9$, $B=64$}
& \multicolumn{2}{c}{\scriptsize $N=23$, $B=32$}\\
\cmidrule(lr){2-3}\cmidrule(lr){4-5}\cmidrule(lr){6-7}
Method & ms & \% of grad.\ step & ms & \% of grad.\ step & ms & \% of grad.\ step\\
\midrule
\textsc{MinibatchOT}
 & $0.19$ & $0.3\%$ & $1.22$ & $1.3\%$ & $1.27$ & $2.3\%$\\
\textsc{FLB}
 & $0.58$ & $1.0\%$ & $2.56$ & $2.6\%$ & $1.42$ & $2.6\%$\\
\textsc{FLB}$+$\textsc{FLB}$_{\mathrm{out}}$
 & $1.26$ & $2.1\%$ & $5.31$ & $5.3\%$ & $2.94$ & $5.2\%$\\
\textsc{GW}
 & $20.73$ & $26\%$ & $115.43$ & $55\%$ & $72.70$ & $58\%$\\
\textsc{GW}$+$\textsc{GW}$_{\mathrm{out}}$
 & $100.30$ & $63\%$ & $640.94$ & $87\%$ & $390.89$ & $88\%$\\
\bottomrule
\end{tabular}
\end{table}

\end{document}